\newcommand{\tabincell}[2]{\begin{tabular}{@{}#1@{}}#2\end{tabular}} 
\newtheorem{theorem}{Theorem}
\newtheorem{definition}[theorem]{Definition}
\newtheorem{remark}{Remark}
\numberwithin{theorem}{section}
\begin{document}
%
\title{Domain Invariant Model with Graph Convolutional Network for Mammogram Classification}
%
%
%

\author{Churan Wang,
        Jing Li,
        Xinwei Sun\Envelope{},
        Fandong Zhang,
        Yizhou Yu,~\IEEEmembership{Fellow,~IEEE},
        and Yizhou Wang 
\thanks{\Envelope{} indicates corresponding author}
\thanks{Churan Wang is with Center for Data Science, Peking University, Beijing, 100871, China, the work was done when she was an intern in Deepwise AI lab (e-mail: churanwang@pku.edu.cn).}
\thanks{Jing Li is with Dept. of Computer Science, Peking University, Beijing, 100871, China (e-mail: lijingg@pku.edu.cn).}
\thanks{Xinwei Sun is with Peking University, Beijing, 100871, China (e-mail: sxwxiaoxiaohehe@pku.edu.cn). }
\thanks{Fandong Zhang is with Center for Data Science, Peking University, Beijing, 100871, China (e-mail: fd.zhang@pku.edu.cn).} 
\thanks{Yizhou Yu is with Deepwise AI Lab, Beijing, 100080, China (e-mail: yizhouy@acm.org).}
\thanks{Yizhou Wang is with Dept. of Computer Science, Peking University, Beijing, 100871, China (e-mail: yizhou.wang@pku.edu.cn).}
}

%
%

\markboth{Churan Wang \MakeLowercase{\textit{et al.}}: Domain Invariant Model with GCN for Mammogram Classification}%
{Churan Wang \MakeLowercase{\textit{et al.}}: Domain Invariant Model with Graph Convolutional Network for Mammogram Classification}
%



\maketitle

\begin{abstract}  
Due to its safety-critical property, the image-based diagnosis is desired to achieve robustness on out-of-distribution (OOD) samples. A natural way towards this goal is capturing only clinically disease-related features, which is composed of macroscopic attributes (\emph{e.g.}, margins, shapes) and microscopic image-based features (\emph{e.g.}, textures) of lesion-related areas. However, such disease-related features are often interweaved with data-dependent (but disease irrelevant) biases during learning, disabling the OOD generalization. To resolve this problem, we propose a novel framework, namely \textbf{D}omain \textbf{I}nvariant \textbf{M}odel with \textbf{G}raph \textbf{C}onvolutional \textbf{N}etwork (DIM-GCN), which only exploits invariant disease-related features from multiple domains. Specifically, we first propose a Bayesian network, which explicitly decomposes the latent variables into disease-related and other disease-irrelevant parts that are provable to be disentangled from each other. Guided by this, we reformulate the objective function based on Variational Auto-Encoder, in which the encoder in each domain has two branches: the domain-independent and -dependent ones, which respectively encode disease-related and -irrelevant features. To better capture the macroscopic features, we leverage the observed clinical attributes as a goal for reconstruction, via Graph Convolutional Network (GCN). Finally, we only implement the disease-related features for prediction. 
The effectiveness and utility of our method are demonstrated by the superior OOD generalization performance over others on mammogram benign/malignant diagnosis. 
\end{abstract}

\begin{IEEEkeywords}
Domain Invariant, Mammogram Classification.
\end{IEEEkeywords}

%
\IEEEpeerreviewmaketitle

\begin{table}[!t]
{
\resizebox{\linewidth}{!}{
\begin{tabular}{p{2.5cm}<{\centering}|p{1.2cm}<{\centering}|p{1.2cm}<{\centering}|p{1.2cm}<{\centering}|p{1.2cm}<{\centering}}
\hline
\multirow{2}{*}{Methodology} & \multicolumn{2}{c|}{ \tabincell{c}{test on \\ InH1}} & \multicolumn{2}{c}{ \tabincell{c}{test on \\ InH2}}\\
\cline{2-5}
 & \tabincell{c}{train on \\ InH2 \\ +InH3 \\ +DDSM} & \tabincell{c}{train on \\ InH1 } & \tabincell{c}{train on \\ InH1 \\ +InH3 \\ +DDSM} & \tabincell{c}{train on \\ InH2}\\
\hline
ERM~\cite{he2016deep} & 0.822 & 0.888 & 0.758 & 0.847 \\
\textbf{Ours} & \textbf{0.948} & \textbf{0.952} & \textbf{0.874} & \textbf{0.898} \\
\hline
\multirow{2}{*}{Methodology} & \multicolumn{2}{c|}{ \tabincell{c}{test on \\ InH3}} & \multicolumn{2}{c}{ \tabincell{c}{test on \\ DDSM} } \\
\cline{2-5}
 & \tabincell{c}{train on \\ InH1 \\ +InH2 \\ +DDSM} & \tabincell{c}{train on \\ InH3} & \tabincell{c}{train on \\ InH1 \\ +InH2 \\ +InH3} & \tabincell{c}{train on \\ DDSM} \\
\hline
ERM~\cite{he2016deep} & 0.735 & 0.776 & 0.779 & 0.847\\
\textbf{Ours} & \textbf{0.858} & \textbf{0.864} & \textbf{0.892} & \textbf{0.919}\\
\hline
\end{tabular}
}
}
\caption{AUC evaluation under out-of-distribution (OOD) and the same distribution (in-distribution). ERM defines training by Empirical Risk Minimization. For each testing set, the results on the left are under OOD circumstance, and the results on the right are under the same distribution.}
\label{tab:intro_compare}
\end{table}

\section{Introduction}

In medical diagnosis, a realistic issue that may hamper the clinical practice is: 
\emph{the distribution of data can vary greatly across healthcare facilities (\emph{e.g.}, hospitals)}, due to inconsistent imaging and preprocessing methods such as staining and scanning. This can fail the traditional Empirical Risk Minimization (ERM), as ERM heavily relies on \emph{independent and identically distributed (i.i.d)} assumption and can exploit spurious correlation during the data-fitting process. Such a spurious correlation may not generalize on unseen domains. This can be manifested by nearly 9\% AUC drop of ERM, as shown in Tab.~\ref{tab:intro_compare}. To satisfy the high safety requirements for medical diagnosis, it is desired to develop a model that can generalize well on out-of-distribution samples (\emph{i.e.}, distribute differently with training samples).

\begin{figure*}[t]
\begin{center}
    \includegraphics[height=0.55\linewidth]{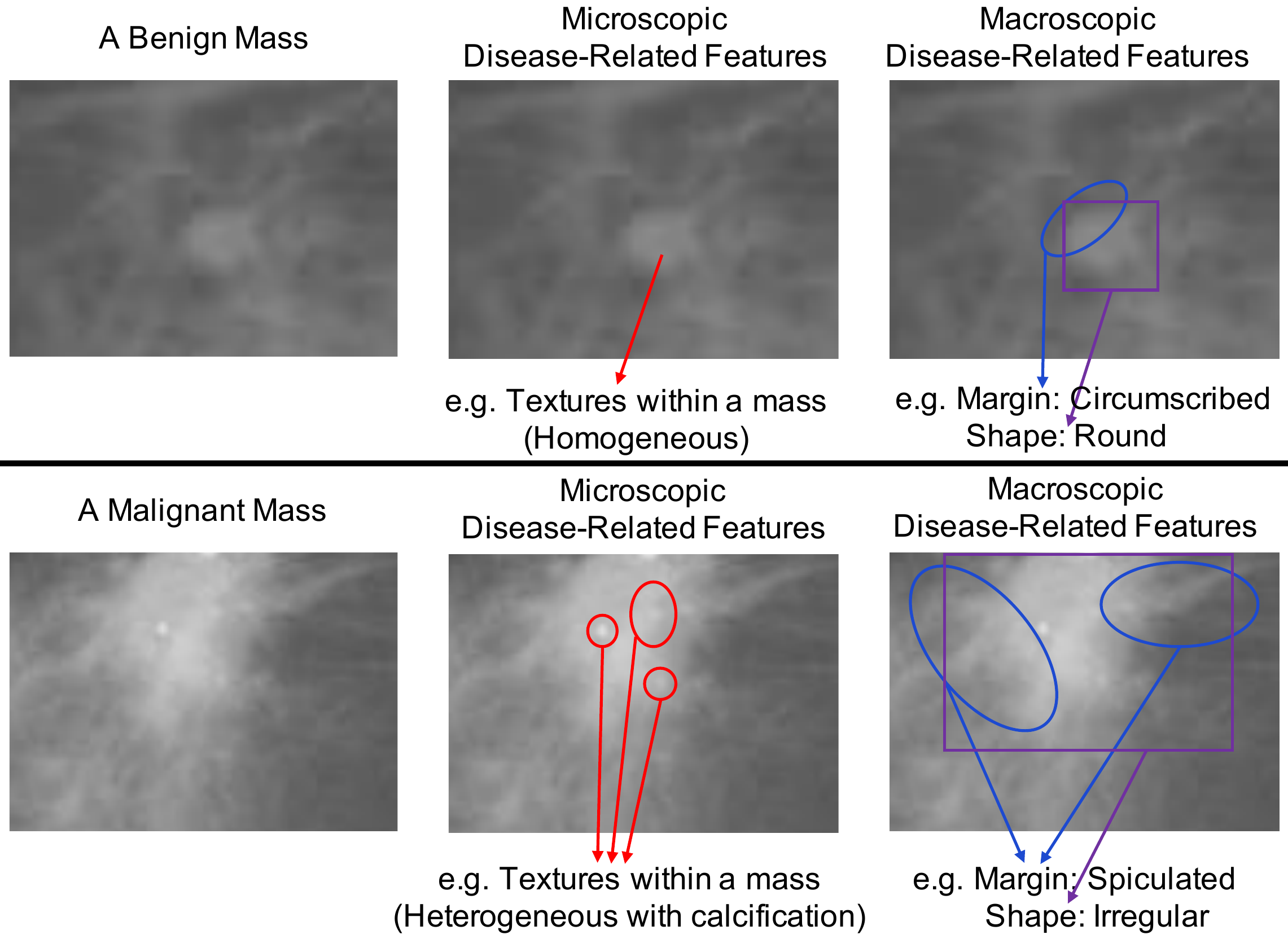}
\end{center}
    \centering
    \caption{The macroscopic and microscopic features of benign/malignant masses. The features behave differently between benign and malignant masses. Microscopic features are homogeneous for benign and heterogeneous for malignant (red arrow). For macroscopic features, the margin is clear and circumscribed in benign mass while spiculated in malignant mass (blue arrow). The shape is regular (\emph{e.g.,} round) in benign mass while irregular in malignant mass (purple arrow).
}
    \label{fig:overview}
\end{figure*}

Recently, there is increasing attention for OOD generalization, such as \cite{arjovsky2019invariant, ganin2016domain, li2018domain}. Given data from multiple domains, the above methods are proposed to learn the invariant representations for prediction. Such invariant learning can improve the generalization ability on general tasks. However, these methods fail to incorporate the medical priors about disease-related features and the underlying generating processes of them, which can limit the utility in medical diagnosis.

In clinical diagnosis, the disease-related features that doctors often employ for prediction are mainly composed of two parts: \emph{macroscopic} and \emph{microscopic} features. Specifically, the macroscopic features encode morphology-related attributes \cite{surendiran2012mammogram} of lesion areas, as summarized in American College of Radiology (ACR) \cite{sicklesacr}; the microscopic features, which reflect subtle patterns of lesions, are hard to observe but helpful for classifying the disease. 
Taking the mammogram in Fig.~\ref{fig:overview} as an illustration, the macroscopic features, e.g., can refer to the margins, shapes, spiculations of the masses; while the microscopic features, e.g., can refer to the textures, and the curvatures of contours~\cite{ding2020optimizing}. As these disease-related features can expose abnormalities (\emph{e.g.}, irregular shapes or textures) for malignancy compared to benign ones, they are determined by the disease status and provide invariant evidence for diagnosis. However, they are often mixed with other domain-dependent but disease-irrelevant noise, such as imaging and preprocessing biases, making them difficult to be identified during learning.

We encapsulate these priors into a latent generative model in Fig.~\ref{fig:graph}, in which the latent variables are explicitly split into three parts: the macroscopic features $a$, the microscopic features $s$ and other disease irrelevant features $z$. These three parts are modeled differently, such that \textbf{i)} as disease-related features, the $a$ and $s$ are invariantly related to the disease label, with $a$ additionally generating the observed attributes; while \textbf{ii)} other disease-irrelevant features $z$ are domain-dependent. We then prove that the disease-related features can be \emph{disentangled} from others. Under this guarantee, we propose a disentangling learning framework, dubbed as \textbf{D}omain \textbf{I}nvariant \textbf{M}odel with \textbf{G}raph \textbf{C}onvolutional \textbf{N}etwork (DIM-GCN), to only exploit disease-related features for prediction. Specifically, we design two-branch encoders for each domain: \emph{Relevant Enc} that is shared by all domains to learn disease-related features, and a domain-specific \emph{Irrelevant Enc} to extract disease-irrelevant features. To impose the disentanglement of invariant diseased-related features, among all latent features that participate in the reconstruction of the image, we only use the disease-related features for disease prediction. To additionally capture the macroscopic features, we enforce them to reconstruct the clinical attributes via Graph Convolutional Network (GCN).

To verify the utility and effectiveness of domain generalization, we perform our method on mammogram mass benign/malignant classification. Here the clinical attributes are those related to the masses, which are summarized in ACR \cite{sicklesacr} and easy to obtain. We consider four datasets (one public and three in-house) that are collected from different sources. In each time's evaluation, we train our method on three datasets and test on the remaining one. The results show that our method can outperform others by 6.2\%. Besides, our learned disease-related features can successfully encode the information on the lesion areas.

As a summary, our contributions are mainly three-fold: 
\begin{enumerate}
\item We propose a novel generative model, which incorporates medical priors regarding disease-related features;
\item We propose a novel DIM-GCN that can disentangle the disease-related features from others to prompt medical diagnosis on an unseen domain; 
\item Our model can achieve state-of-the-art OOD generalization performance for mass benign/malignant diagnosis.
\end{enumerate}

\section{Related Work}

\noindent\textbf{Patch-Level Mammogram Mass Classification.}
Previous approaches that can be used to address patch-level mammogram mass benign/malignant classification without ROI annotations are roughly categorized into three classes: (i) the GAN-based methods, \emph{e.g.,} Li \textit{et al.}~\cite{li2019signed}; (ii) the disentangling-based methods, \emph{e.g.,} Guided-VAE \cite{ding2020guided}; (iii) the attribute-guided methods, \emph{e.g.,} Chen \textit{et al.} \cite{chen2019multi}, ICADx \cite{kim2018icadx}. For class (i), they propose an adversarial generation to augment training data for better prediction. However, lacking the guidance of medical knowledge descents their performance. For class (ii), the disentangling mechanism can provide better disease-related representation learning but lacking the guidance of the prior of attributes limits their performance. For class (iii), the prior of attributes is considered into their methods. Insufficient utilization of attributes descents their effectiveness. Besides, above all methods do not consider domain bias while training. Changing the domain of data will directly cause drops on their results. Motivated by the above, we use the disentangling mechanism and domain knowledge with Graph Convolutional Network(GCN) for better learning invariant disease-related features and improving the ability of generalization in unseen domains.

\noindent\textbf{Domain Generalization.}
Domain generalization considers multiple domains and aims to improve the generalization performance in an unseen domain. For domain generalization, previous methods will lead to a dramatic performance decrease when testing on data from a different distribution with a different bias~\cite{ilse2020diva}. Thus such previous models are not robust enough to the actual task~\cite{azulay2018deep}. Progress has been made on domain generalization which can be roughly divided into three classes: (i) Learning the domain-specific constraints, \emph{e.g.,} \cite{chattopadhyay2020learning} aims to learn domain-specific masks to characterize different domains(\emph{e.g.,} clipart, sketch, and painting). They fail in medical images for masks are not suitable to distinguish different domains. (ii) Disentangle-based, \emph{e.g.,} \cite{ilse2020diva} proposes a generative model with three independent latent subspaces for the domain, the class and the residual variations respectively. They did not make use of the medical attribute knowledge which is important in our mammogram classification. (iii) Design invariant constrains, \emph{e.g.,} \cite{arjovsky2019invariant} aims to learn invariant representation across environments by minimizing the Invariant Risk Minimization term. \cite{ganin2016domain} and \cite{li2018domain} use adversarial way with the former performs domain-adversarial training to ensure a closer match between the source and the target distributions and the latter trains an Adversarial Autoencoder. Lack of disentanglement and the guidance of medical prior knowledge limits their performance on generalization.

\section{Methodology}

\noindent\textbf{Problem Setup $\&$ Notations.} Denote $x \in \mathcal{X}, y \in \mathcal{Y},A\in \mathcal{A}$ respectively as the image, benign/malignant label, and clinical attributes (\emph{e.g.}, margins, lobulations). We collect datasets $\{x^d_i,y^d_i,A^d_i\}$ from multiple domains (\emph{i.e.}, healthcare facilities in our scenario) $d \in \mathcal{D}$. Our goal is learning a disease-prediction model $f: \mathcal{X} \to \mathcal{Y}$ 
from training domains $\mathcal{D}_{\mathrm{train}}$, that generalizes well on all domains $\mathcal{D} \supset \mathcal{D}_{\mathrm{train}}$, especially out-of-distribution domains $\mathcal{D}_{\mathrm{test}} := \mathcal{D} \setminus \mathcal{D}_{\mathrm{train}}$. Denote $m:= |\mathcal{D}_{\mathrm{train}}|$.

This section is organized as follows: in Sec.~\ref{sec:DIM}, we first introduce our Bayesian network that encodes the medical prior knowledge of our problem. We prove that the disease-related features can be disentangled from others. With this theoretical guarantee, we in Sec.~\ref{sec:variational} introduce our learning method dubbed as \textbf{D}omain \textbf{I}nvariant \textbf{M}odel with \textbf{G}raph \textbf{C}onvolutional \textbf{N}etwork. Specifically, we reformulate the Variational Auto-Encoder (VAE) in Sec.~\ref{sec:reformulation}; then, we introduce our inference (encoder) model with disentanglement learning and generative (decoder) model with GCN in Sec.~\ref{sec:enc-dec}; finally, we introduce a \emph{variance regularizer} to further prompt the learning of invariant disease-related features in Sec.~\ref{sec:var}.



\begin{figure}[t]
\centering
\includegraphics[width=.95\linewidth]{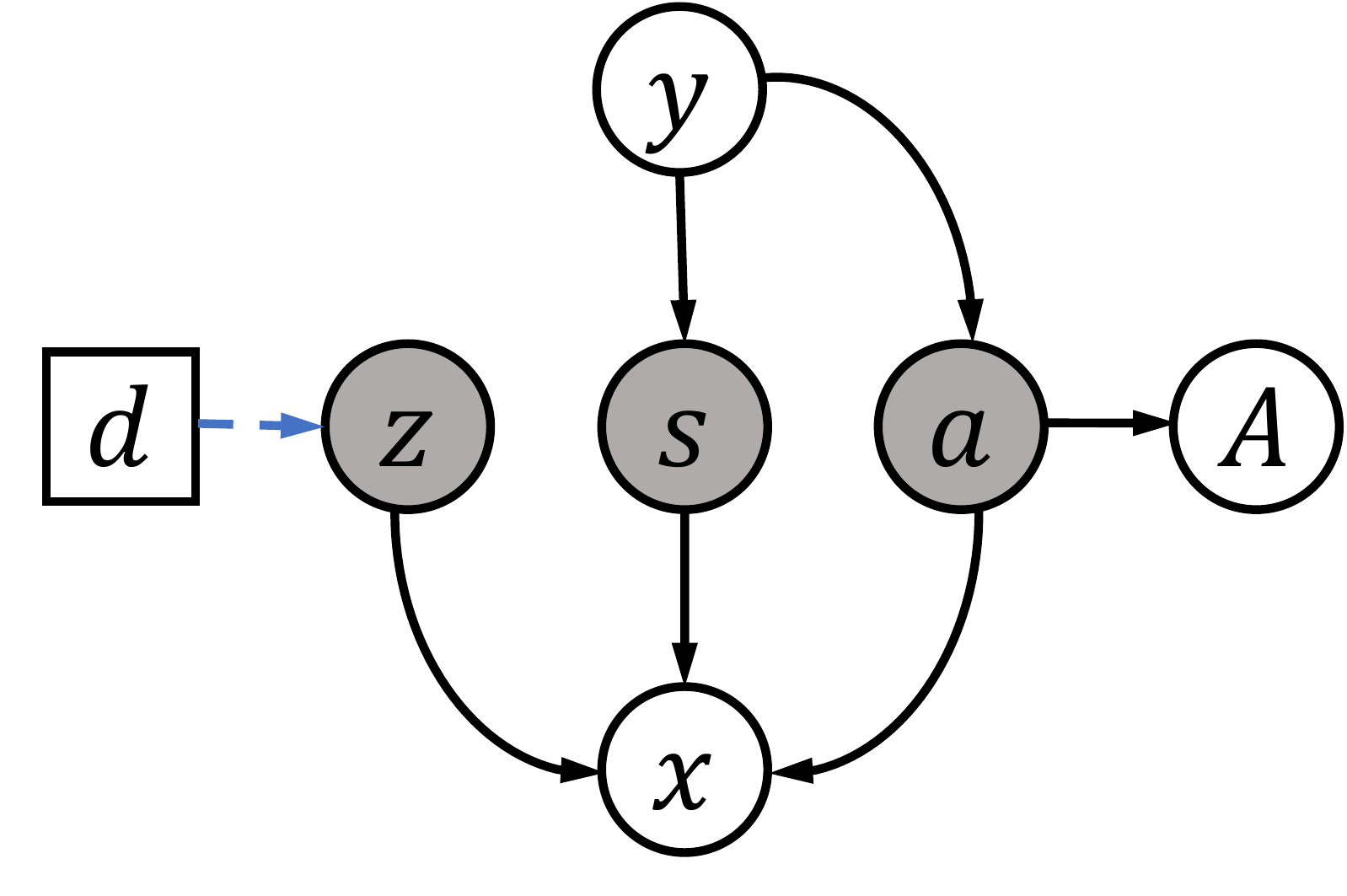}
\caption{Our Bayesian Network. The Bayesian network depicts the underlying generation process of the medical disease. The edges represent the probabilistic relationships between images, attributes, disease labels and domains in our problem.} 
\label{fig:graph}
\end{figure} 


\subsection{Bayesian Network with Latent Variables} 
\label{sec:DIM}



Our Bayesian Network is illustrated in Fig.~\ref{fig:graph}, which encodes the medical priors of disease diagnosis into the generating processes. Specifically, among all latent components that generate the image $x$, we split them into three parts: $a$, $s$ and $z$ that respectively encodes the information of macroscopic (such as shapes, margins \cite{sicklesacr}), microscopic (textures, curvatures of contours \cite{ding2020optimizing}) and disease-irrelevant features. As disease-related features that provide stable evidence for diagnosis, the $a$ and $s$ are assumed to generate from the disease status $y$ via an invariant mechanism ($p(s,a|y)$ is invariant across domains). Such features are often mixed with other variations, \emph{a.k.a.} $z$, which encodes the domain-dependent information such as imaging and pre-processing biases.
This mixture can lead to the difficulty of identifying the $s,a$ for prediction. To further differentiate $a$ from $s$, we additionally assume that the $a$ generates the clinical attributes $A$, which are often employed by clinicians for prediction due to its explainability and easiness to observe. Such disease-related macroscopic attributes $A$ are objectively existing properties of the lesion area. These properties and their corresponding lesion area, are generated concurrently from the disease status $y$, via the latent component $a$ \cite{bowyer1996digitalshort}. Our Fig.~\ref{fig:graph} depicts the underlying generation process of the medical image and clinical attributes, instead of the disease inference process on the observational data. The microscopic features $s$, which is hard to observe, can provide additional information for prediction. We assume the generating processes from $a \to A$ and $z,s,a \to X$, as they follow from physical laws, to be invariant across domains \cite{sun2020latent}. 

A natural \emph{identifiability} question towards robust prediction imposes: \emph{will the disease-related features (that estimated from $x$) can be possibly identified, without mixing the information of others?} The following theorem provides a positive answer, which provides a guarantee for us to propose the learning method that can learn the $p^d(x,y,A)$ well.

\begin{theorem}[Informal]
\label{thm:dis}
Suppose that multiple environments are diverse enough, then there exist functions $h_s,h_z,h_a$ such that, for any latent variables $(\tilde{s},\tilde{a},\tilde{z})$ and $(s^\star,a^\star,z^\star)$ giving rise to the same observational distribution (\emph{i.e.}, $p(x,y,A)$), we have that the  
\begin{align*}
    \tilde{s} = h_s(s^\star), \ \tilde{z} = h_z(z^\star), \ \tilde{a} = h_a(a^\star).
\end{align*}
\end{theorem}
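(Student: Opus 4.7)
My plan is to establish identifiability in three successive stages, each exploiting a different structural element of the Bayesian network in Fig.~\ref{fig:graph}.

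First, I would use the observed clinical attributes $A$ to pin down the macroscopic latent $a$. Because $p(A \mid a)$ and $p(a \mid y)$ are invariant across domains and $A$ is observed, the conditional $p(A \mid y) = \int p(A \mid a) p(a \mid y)\, da$ is fixed by the data, and in fact the full joint $p(A, a, y)$ marginals coincide for the two parameterizations. Assuming standard injectivity/completeness of the generating map $a \mapsto p(A \mid a)$ (in the spirit of the auxiliary-variable identifiability results of Khemakhem \emph{et al.}), any two parameterizations producing the same $p(A, y)$ must agree up to a bijection $h_a$, giving $\tilde a = h_a(a^\star)$.

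Second, I would separate the domain-dependent $z$ from the invariant pair $(s, a)$ using variability across domains. For each fixed $y$, the conditional $p^d(x \mid y)$ varies with $d$ only through the domain-specific prior $p^d(z)$, since both $p(s, a \mid y)$ and $p(x \mid s, a, z)$ are invariant by assumption. The ``diverse enough'' hypothesis can then be made precise as a spanning/linear-independence condition on the collection of log-densities (or sufficient statistics) $\{\log p^d(z)\}_{d \in \mathcal{D}_{\mathrm{train}}}$. Differentiating the identity $p^d_{\tilde\theta}(x \mid y) = p^d_{\theta^\star}(x \mid y)$ with respect to the domain index and using injectivity of the invariant decoder in its $z$-argument forces the $z$-coordinates of $\tilde\theta$ and $\theta^\star$ to coincide up to a bijection, yielding $\tilde z = h_z(z^\star)$.

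Third, with $a$ fixed by step one and $z$ fixed by step two, I would match the residual conditional $p(x \mid y, a, z)$ between the two parameterizations. Since this distribution depends only on $s$ through the invariant decoder, and $s$ is conditionally dependent on $y$, injectivity of $(s, a, z) \mapsto x$ (holding $a, z$ constant) yields a bijection $\tilde s = h_s(s^\star)$.

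The main obstacle will be step two: translating ``multiple environments are diverse enough'' into a formal spanning condition that is strong enough to give component-wise (rather than merely block-wise) separation of $z$ from $(s, a)$, while being weak enough to be realistic for the handful of healthcare facilities in the experiments. This will likely require an assumption that the number $m$ of training domains exceeds the dimension of a sufficient-statistic family for $z$, together with a non-degeneracy condition ensuring that the Jacobian of the map from domain index to the variable part of $\log p^d(x \mid y)$ has full column rank. The other two steps are comparatively routine once this auxiliary-variable machinery is in place, as they reduce to standard invariance-plus-injectivity arguments with the observed variables $A$ and $y$ playing the role of the auxiliary signal.
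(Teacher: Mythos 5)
Your plan is essentially the paper's own argument: identify $a$ from the attribute channel $A$, identify $z$ from cross-domain variation via a full-column-rank ("diversity") condition on the domain-indexed natural parameters, and then peel off $s$ from the remaining $y$-dependent part of $p(x\mid y,d)$, all within the exponential-family-plus-additive-noise (Khemakhem-style deconvolution) framework that the appendix formalizes. The only looseness is that step three cannot literally condition on the latent $a,z$ — the paper instead subtracts the already-derived linear relations for $\mathbf{T}^a$ and $\mathbf{T}^z$ from the combined log-density identity — and note that the theorem only requires block-wise (not component-wise) separation, so your main worry is milder than you fear.
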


\begin{figure*}[t]
\centering
\includegraphics[width=1.0\linewidth]{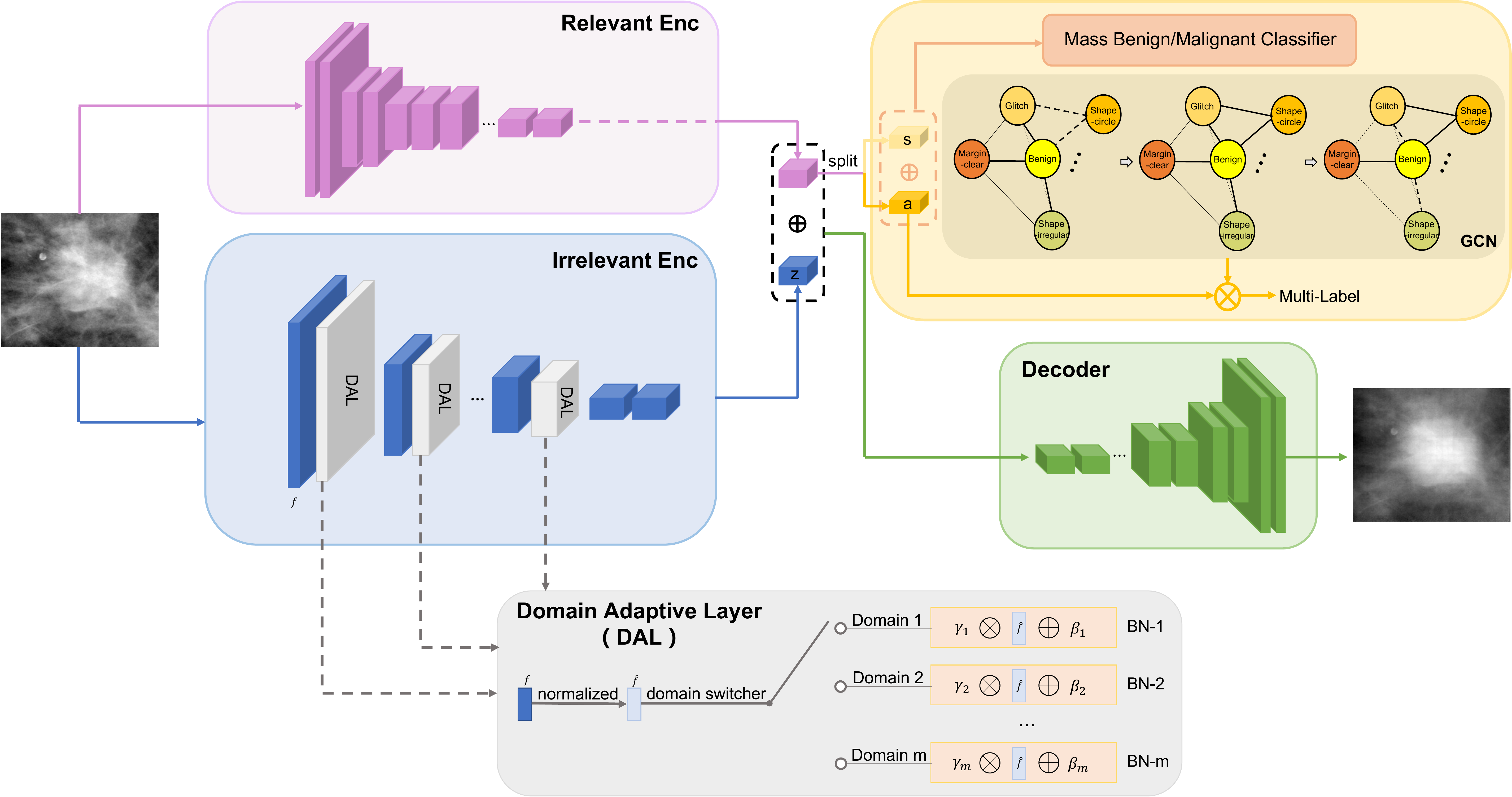}
\caption{Domain Invariant Model with Graph Convolutional Network (DIM-GCN). In the training phase, the input image is fed into \emph{Relevant Enc} and \emph{Irrelevant Enc}. In \emph{Irrelevant Enc}, images from different domains are fed into corresponding domain adaptive layer respectively. We optimize the overall network on multiple training domains. In the test phase, an image from an unseen domain is only fed into \emph{Relevant Enc} to get the disease-related factors $s$ and $a$. The final prediction result is obtained by the classifier.} 
\label{fig:network}
\end{figure*}

\begin{remark}
It can be easily seen from Theorem~\ref{thm:dis} that if $(s^\star,a^\star,z^\star)$ denote the ground-truth latent variables that generate $p(x,y,A)$, then the learned latent variables $\tilde{s}$, $\tilde{a}$ and $\tilde{z}$ \emph{do not} mix information of others. The diversity condition, which requires that the extent of dependency of $y$ on $z$ varies across domains, is almost necessary to ensure the invariance disease-related features to be identified, and is thus similarly assumed in the literature \cite{sun2020latent}.
\end{remark}



\subsection{Domain Invariant Model with Graph Convolutional Neural Network}
\label{sec:variational}
Guided by Theorem~\ref{thm:dis}, to achieve disentanglement, it is sufficient to learn $\{p^d(x,y,A)\}$ well. To achieve this goal, we first reformulate the \textbf{E}vidence \textbf{L}ower \textbf{BO}und (ELBO) of VAE based on Fig.~\ref{fig:graph} in Sec.~\ref{sec:reformulation}, followed by inference (\emph{a.k.a} Encoder) and generative (\emph{a.k.a} Decoder) models in Sec.~\ref{sec:enc-dec} and finally the \emph{variance regularizer} in Sec.~\ref{sec:var}. 
\subsubsection{\textbf{ELBO Reformulation}}
\label{sec:reformulation}
The VAE \cite{kingma2013auto} was proposed to learn $p(x)$ with $Z \to X$. To resolve the intractability for high-dimensional data $x$, it introduced the variational distribution $q(z|x)$ and to maximize the \textbf{E}vidence \textbf{L}ower \textbf{BO}und (ELBO) defined as $\max_{q,p_\theta} \mathbb{E}_{p(x)}\left( \mathbb{E}_{q(z|x)}\left( \log{\frac{p_\theta(x,z)}{q(z|x)}} \right) \right) \leq  \mathbb{E}_{p(x)}\left(\log{p_\theta(x)} \right)$. The ``=" can be achieved as long as $q(z|x) = p_\theta(z|x)$. In other words, the variational distribution can learn the behavior of the posterior distribution during optimization. 

To learn $p^d(x,y,A)$ following from Fig.~\ref{fig:graph} for each domain $d$, we similarly derive the ELBO for 
\begin{align}
    \log p_\theta^d(x, y, A) & =  \log p_\theta^d(x) + \log p_\theta^d(A|x) + \log p_\theta^d(y|x, A), \nonumber 
\end{align}
according to the Bayesian formula. We introduce the variational distribution $q^d(v|x)$ ($v := (z,s,a)$). The corresponding ELBO for the first term, \emph{i.e.}, $\log p^d_\theta(x)$ can be derived as:
\begin{align}
& \mathbb{E}_{p^d(x,y,A)}\left( \mathbb{E}_{q^d(v|x)}\left(\log{\frac{p^d_\theta(x,v)}{q^d(v|x)}}\right) \right) \label{eq:elbo_main} \\
= & \mathbb{E}_{p^d(x,y,A)}\left( \mathbb{E}_{q^d(v|x)}\left(\log{p_\theta(x|v)}\right) \right) \nonumber \notag \\
& \ \ \ \ \ - \mathbb{E}_{p^d(x,y,A)}\left(\mathrm{KL}(q^d(v|x), p_\theta^d(v)) \right), \notag \nonumber
\end{align}
which is $\leq \mathbb{E}_{p^d(x,y,A)} \left( \log{p^d_\theta(x)} \right)$ and the equality can only be achieved once $q^d(v|x)=p_\theta^d(v|x)$. Combined with $\log p^d_\theta(A|x)$ and $\log p^d_\theta(y|x, A)$, the loss function for domain $d$ is:
\begin{align}
    & \ \ell^d(q^d,p_\theta^d) = -\frac{1}{n_d} \sum_{i=1}^{n_d} \left( \log p^d_\theta(A_i|x_i) + \log p^d_\theta(y_i|x_i, A_i) + \right. \nonumber \\
    & \ \ \ \ \ \ \left. \mathbb{E}_{q^d(v|x_i)}\left(\log{p_\theta(x_i|v)}\right) - \mathrm{KL}(q^d(v|x_i), p_\theta^d(v)) \right). \nonumber 
\end{align}
With $q^d(v|x)$ learned to approximate $p^d_\theta(v|x)$, the $p^d_\theta(A|x)$ and $p^d_\theta(y|x, A)$ can be approximated via: 
\begin{align}
    p^d_\theta(A|x) & \approx \int p_\theta(A|a)q(a|x)da, \label{eq:A} \\
    p^d_\theta(y|A,x) & \approx \int p_\theta(y|s,a)q(s,a|x)dads.  \label{eq:y}
\end{align}
The $p_\theta(x|v)$ in Eq.~\eqref{eq:elbo_main} and $p_\theta(A|a)$, $p_\theta(y|s,a) = p_\theta(s,a|y)p_\theta(y)/p_\theta(s,a)$ in Eq.~\eqref{eq:A},~\eqref{eq:y}, which are invariant across domains, have their parameters $\theta$ shared by all domains $d$. To optimize the loss, we need to respectively parameterize the prior models $p^d_\theta(z,s,a)$, inference models $q^d(z,s,a|x)$ and generative models $p_\theta(x|z,s,a),p_\theta(A|a), p_\theta(y|s,a)$. 

Following the graph in Fig.~\ref{fig:graph}, the $p_\theta^d(z,s,a)$ can be factorized as $p_\theta^d(z,s,a)=p(s,a)p_\theta(z|d)$, where the $p(s,a)$ can be modeled as isotropic Gaussian while $p_\theta(z|d)$ can be learned via Multilayer Perceptron (MLP) with one-hot encoded vector $d \in \mathbb{R}^m$ as input. 

\noindent \textbf{Inference models.} We adopt the mean-field approximation to factorize $q(z,s,a|x,d)$ as $q(s,a|x)*q(z|x,d)$. This motivates us to implement a two-branch encoder, a domain-invariant (\emph{a.k.a}, relevant encoder) one $q(s,a|x)$ and a domain-specific one $q(z|x,d)$ (\emph{a.k.a}, irrelevant encoder), as shown in Fig.~\ref{fig:network} and the subsequent section. Together with prior models, the inference models are the inputs of KL-divergence term. 

\noindent \textbf{Generative models.} We optimize to reconstruct $x,A$ and predict $y$ via $p_\theta(x|z,s,a),p_\theta(A|a)$ and $p_\theta(y|s,a)$. Particularly, to model the correlation among attributes, we implement Graph Convolutional Network (GCN) to learn $p_\theta(A|a)$. 

As illustrated in Fig.~\ref{fig:network}, all models are optimized following a variational Auto-Encoder scheme. In the next subsection, we will introduce the architectures of the encoder, decoder to implement the inference models and the generative models.

\subsubsection{\textbf{Encoder-Decoder Architecture}}
\label{sec:enc-dec}
As shown in Fig.~\ref{fig:network}, our DIM-GCN contains the following components: \emph{two-branch encoders} dubbed as \emph{Relevant Enc} for $q(s,a|x)$ and \emph{Irrelevant Enc} for $q(z|x,d)$ to respectively encode the disease-related (\emph{i.e.}, $s,a$) and -irrelevant features (\emph{i.e.}, $z$), a \emph{decoder} for $p_{\theta}(x|v)$ to reconstruct the image $x$, a \emph{GCN} for $p_{\theta}(A|a)$ to reconstruct the attributes, and finally a \emph{disease classifier} for $p_{\theta}(y|s, a)$ for final disease classification. Each component is introduced in details below. 

\noindent \textbf{Two-Branch Encoders for $q(s, a|x)$ and $q^d(z|x)$.}
Motivated by the factorization of $q^d(z,s,a|x)$ into $q(s,a|x)*q(z|x,d)$ in Sec.~\ref{sec:DIM}, we propose two-branch encoders, namely \emph{Relevant Enc} for $q(s,a|x)$ and \emph{Irrelevant Enc} for $q(z|x,d)$. For the disease classification $p(y|s,a)$, the \emph{Relevant Enc} encodes the disease-related features $(s,a)$ from $x$, with the macroscopic features $a$ additionally enforced to reconstruct the attributes $A$ well. The \emph{Irrelevant Enc} encodes other disease-irrelevant features $z$ from $x$.
As such features are domain-specific, we incorporate a domain adaptive layer into the encoder. Specifically, the domain adaptive layer that following the convolutional layer contains m batch normalization (BN) layers, as shown in Fig.~\ref{fig:network}. The scaling and shifting parameters in each BN are learned adaptively with the domain in the training.

Denote the features from the convolutional layer as $f$, then we get the output features of the domain adaptive layer for each domain $d$ ($d \in \{1, ..., m\}$) by:
\begin{equation}
\begin{aligned}
    f_1 = BN_{\gamma_1, \beta_1}(f) 
    & = \gamma_1 * \hat{f} + \beta_1 \\
    \vdots \\
    f_m = BN_{\gamma_m, \beta_m}(f) 
    & = \gamma_m * \hat{f} + \beta_m 
\end{aligned}
\end{equation}
with $\hat{f} = \frac{f-\mu_B}{\sqrt{\delta_B^2 + \epsilon}}$ is the normalized features by the mini-batch mean $\mu_B$ and variance $\delta_B$. Each domain has a set of learning parameters $\gamma_d, \beta_d$.

Such a two-branch encoder is trained by a KL-divergence term:
\begin{align}
     & \mathbb{E}_{p^d(x)} \left( \mathrm{KL}(q^d(z,s,a|x),p^d(z,s,a)  \right) \nonumber  \\
     \approx & \frac{1}{n_d} \sum_{n} \left( \mathrm{KL}(q(s,a|x_n),p(s,a)) \right. \nonumber \\
     & \ \ \  \left. + \mathrm{KL}(q(z|x_n, d)),p(z|d)) \right) \overset{\Delta}{=} \mathcal{L}_{\mathrm{kl}}^d, \label{eq:kl}
\end{align}
with prior models $p(s,a)$ modeled as isotropic Gaussian and $p(z|d)$ as a two-layer MLP that takes the word embedding of $d \in \mathbb{R}^m$ as input.




The disentanglement of $z,s,a$ lies in their asymmetric roles in the generating processes. Specifically, among all latent variables $z,s,a$ that reconstructs $x$ via \emph{decoder} for $p_\theta(x|z,s,a)$, only $s,a$ are used in the \emph{classifier} for $p_\theta(y|s,a)$ and only $a$ is used in GCN for $p_\theta(A|a)$ to reconstruct attributes $A$. 

\noindent \textbf{Decoder for $p_{\theta}(x|z,s,a)$.} The decoder, that outputs $\hat{x} := \mathrm{Dec}(z,s,a)$, is trained to reconstruct the original image $x$ well by minimizing 
\begin{align}
    & \mathbb{E}_{p^d(x)}\left( \mathbb{E}_{q^d(v|x)}(\log{p_\theta(x|v)}) \right) \label{eq:dec} \\
    \approx & \frac{1}{n_d} \sum_n \Vert x_n - \hat{x}_n\Vert_2^2 \overset{\Delta}{=} \mathcal{L}_{\mathrm{rec}}^d. \nonumber 
\end{align}

\noindent\textbf{Graph Convolutional Network for $p_{\theta}(A|a)$.}
The correlations between attributes in lesions are strongly related and helpful to the cancer diagnosis. To make full use of this prior knowledge and further promote the disentanglement of the invariant disease-related features, we parameterize $p_{\theta}(A|a)$ by a Graph Convolutional Network (GCN) which is a flexible way to capture the topological structure in the label space.

Along with~\cite{chen2019multi}, we build a graph $G=(U, E)$ with twelve nodes and consider each attribute as a node, \emph{e.g.,} Shape-circle, Margin-clear. Each node $u \in U$ represents the word embedding of the attributes. Each edge $e \in E$ represents the inter-relevance between attributes. The inputs of the graph are features representations $H^l$ and corresponding correlation matrix $B$ which is calculated in the same as~\cite{chen2019multi}. For the first layer, $H^0 \in {\mathbb{R}}^{c \times c'}$ denotes the one-hot embedding matrix of each attribute node where $c$ is the number of attributes, $c'$ is the length of embeddings. Then, the feature representation of the graph at every layer~\cite{kipf2016semi} can be calculated as follow:
\begin{align}
    \label{eq:layer}
    H^{l+1} = \delta(BH^{l}W^{l}),
\end{align}
where $\delta(\cdot)$ is LeakyRelu, $W^{l}$ is the transformation matrix which is the parameter to be learned in the $l$th layer. The output $\{\hat{g}^k\}_{k}$ (with $\hat{g}^k:= \mathrm{GCN}([\text{Relevant-Enc}(x)]_{\mathcal{A}})$) is learned to approximate attributes $\{g^k\}_{k}$ (with each $g^k \in \{0,1\}$) to train the GCN: 
\begin{align}
    & - \mathbb{E}_{p^d(A|x)} \left( \log{p_{\theta}(A|x)} \right) \label{eq:gcn} \\
    \approx & -\frac{1}{n_d} \sum_{n} \left( \sum_{k=1}^C g_n^k \log \hat{g}_n^k + (1-g_n^k) \log (1-\hat{g}_n^k) \right) \overset{\Delta}{=} \mathcal{L}_{\mathrm{gcn}}^d.\nonumber 
\end{align}







\noindent\textbf{Disease Classifier for $p_{\theta}(y|s,a)$.} Finally, the classification network for $p_{\theta}(y|s,a)$ is trained via 
\begin{align}
    & - \mathbb{E}_{p^d(A|x)} \left( \log{p_{\theta}(y|A,x)} \right) \label{eq:cls} \\
    \approx & -\frac{1}{n_d} \sum_{n} \left( y_n \log\hat{y}_n+(1-y_n)\log(1-\hat{y}_n) \right) \overset{\Delta}{=} \mathcal{L}_{\mathrm{cls}}^d,
\end{align}
where $\hat{y}:=\mathrm{Cls}(\text{Relevant-Enc}(x))$ is the predicted label for $y$. 
Combined with Eq.~\eqref{eq:kl},~\eqref{eq:dec},~\eqref{eq:gcn},~\eqref{eq:cls}, the loss for domain $d$ is:
\begin{equation}
\label{eq:d}
    \mathcal{L}^d = \mathcal{L}_{\mathrm{kl}}^d +  \mathcal{L}_{\mathrm{rec}}^d + \mathcal{L}^d_{\mathrm{gcn}} +  \mathcal{L}_{\mathrm{cls}}^d.
\end{equation}

\subsubsection{\textbf{Variance Regularizer}}
\label{sec:var}
To further prompt the invariance for encoded $s,a$ across multiple domains, we leverage an additional \emph{variance regularizer} \cite{krueger2020out},
\begin{equation}
    \mathcal{L}_{\mathrm{var}} = \underset{d}{Var}\{\mathcal{L}^d_{\mathrm{gcn}}\} +
    \underset{d}{Var}\{\mathcal{L}^d_{\mathrm{cls}}\},
\end{equation}
where $\underset{d}{Var}\{\mathcal{L}^d_{\mathrm{gcn}}\}$ and $\underset{d}{Var}\{\mathcal{L}^d_{\mathrm{cls}}\}$ respectively denotes the variance of $\mathcal{L}^d_{\mathrm{gcn}}$ and $\mathcal{L}^d_{\mathrm{cls}}$ across $\mathcal{D}_{\mathrm{train}}$. This is motivated by our expectation to learn $s,a$ that can stably predict disease and reconstruct the clinical attributes. Combined with the loss for domain $d$ in Eq.~\eqref{eq:d}, the final loss is
\begin{align}
    \mathcal{L} = \sum_{d} \mathcal{L}^d +\beta \mathcal{L}_{\mathrm{var}}
\end{align}
where $\beta>0$ trades off the losses and the variance regularizer. 



\section{Experiments}

\noindent\textbf{Datasets.}
To evaluate the effectiveness of our model, we apply our model on patch-level mammogram mass benign/malignant classification, which drives increasing attention recently \cite{zhao20183d,lei2020shape} due to its clinical use. 
We consider both the public dataset DDSM~\cite{bowyer1996digital} and three in-house datasets: InH1, InH2 and InH3. These datasets are collected from different hospitals with different types of imaging devices (\emph{e.g.}, X-ray devices). 
For each dataset, the region of interests (ROIs) (malignant/benign masses) are cropped based on the annotations of radiologists the same as~\cite{kim2018icadx}. For each dataset, we randomly split it into training, validation, and testing sets with 8:1:1 patient-wise ratio\footnote{The \cite{bowyer1996digital} has not published their splitting method. We publish our test set for DDSM in Appendix}. The training/valid/testing samples we use contain 1165 ROIs from 571 patients/143 ROIs from 68 patients/147 ROIs from 75 patients in DDSM~\cite{bowyer1996digital}, 684 ROIs from 292 patients/87 ROIs from 38 patients/83 ROIs from 33 patients in InH1, 840 ROIs from 410 patients/104 ROIs from 50 patients/105 ROIs from 52 patients in InH2, and 565 ROIs from 271 patients/70 ROIs from 33 patients/70 ROIs from 34 patients in InH3.

\noindent\textbf{Implementation Details.} For a fair comparison, all methods are conducted under the same setting and share the same encoder backbone, \emph{i.e.}, ResNet34~\cite{he2016deep}. Meanwhile, the decoder is the deconvolution network of the encoder. For attribute annotations, in DDSM~\cite{bowyer1996digital} annotations can be parsed from the ".OVERLAY" file. The third line in the ".OVERLAY" file has annotations for types, shapes, and margins of masses. And in our in-house datasets, we obtain attribute annotations from the verify of one director doctor based on the annotations of three senior doctors. The inputs are resized into 224 × 224 with random horizontal flips and fed into networks. We implement all models with PyTorch. We implement Adam for optimization. The weight hyperparameter in \emph{variance regularizer} $\beta$ is 1 in our experiments. 
The clinical attributes contain \emph{circle, oval, irregular, circumscribed, obscured, ill-defined, is-lobulated, not-lobulated, is-spiculated, not-spiculated}. We add additional benign and malignant nodes to learn the correlation between the combination of attributes and benign/malignant. To verify the effectiveness of domain generalization, we test on all four datasets under OOD settings, \emph{i.e.}, every dataset is used as the testing set respectively while the other three are used as the training sets. To remove the randomness, we run for 10 times and report the average value of them. To further validate our effectiveness, we also give in-distribution results of each dataset which can be seen as the upper bounds of each dataset, \emph{i.e.}, using the training set with the same domain as the testing set. For a fair comparison, the number of above all training sets all keep the same. Area Under the Curve (AUC) is used as the evaluation metric in image-wise. For implementation of compared baselines, we directly load the published codes of ERM~\cite{he2016deep}, Chen \textit{et al.} \cite{chen2019multi}, DANN \cite{ganin2016domain}, MMD-AAE \cite{li2018domain}, DIVA \cite{ilse2020diva}, IRM \cite{arjovsky2019invariant} and Prithvijit \textit{et al.}\cite{chattopadhyay2020learning} during test; while we re-implement methods of Guided-VAE \cite{ding2020guided}, ICADx \cite{kim2018icadx} and Li \textit{et al.} \cite{li2019signed} for lacking published source codes.

\begin{table*}[!t]
{
\centering
\tiny
\resizebox{\linewidth}{!}{
\begin{tabular}{p{2.5cm}<{\centering}|p{1.2cm}<{\centering}|p{1.2cm}<{\centering}|p{1.2cm}<{\centering}|p{1.2cm}<{\centering}}
\hline
\multirow{2}{*}{Methodology} & \tabincell{c}{train on \\ InH2 \\ +InH3 \\ +DDSM} & \tabincell{c}{train on \\ InH1 \\ +InH3 \\ +DDSM} & \tabincell{c}{train on \\ InH1 \\ +InH2 \\ +DDSM} & \tabincell{c}{train on \\ InH1 \\ +InH2 \\ +InH3}  \\
\cline{2-5}
 & \tabincell{c}{test on \\ InH1} & \tabincell{c}{test on \\ InH2} & \tabincell{c}{test on \\ InH3} & \tabincell{c}{test on \\ DDSM}  \\
\hline
ERM~\cite{he2016deep} & 0.822 & 0.758 & 0.735 & 0.779 \\
Chen \textit{et al.} \cite{chen2019multi} & 0.877 & 0.827 & 0.804 & 0.830 \\
Guided-VAE \cite{ding2020guided} & 0.872 & 0.811 & 0.779 & 0.811 \\
ICADx \cite{kim2018icadx} & 0.882 & 0.802 & 0.777 & 0.826 \\
Li \textit{et al.} \cite{li2019signed} & 0.848 & 0.794 & 0.769 & 0.815 \\ 
DANN \cite{ganin2016domain} & 0.857 & 0.811 & 0.781 & 0.813 \\
MMD-AAE \cite{li2018domain} & 0.860 & 0.783 & 0.770 & 0.786 \\
DIVA \cite{ilse2020diva} & 0.865 & 0.809 & 0.784 & 0.813 \\
IRM \cite{arjovsky2019invariant} & 0.889 & 0.830 & 0.795 & 0.829 \\
Prithvijit \textit{et al.}\cite{chattopadhyay2020learning} & 0.851 & 0.796 & 0.772 & 0.797 \\
\cline{1-5}
\textbf{Ours} & \textbf{0.948} &  \textbf{0.874} & \textbf{0.858} & \textbf{0.892}\\
\hline
\end{tabular}
}
}
\caption{AUC evaluation of Domain Generalization on public DDSM~\cite{bowyer1996digital} and three in-house datasets. (OOD settings: training and testing are from different domains)}
\label{tab:comparison_all}
\end{table*}

\subsection{Results}

\noindent\textbf{Compared Baselines.} We compare our model with following methods: \textbf{a)} ERM~\cite{he2016deep} directly trains the classifier via ResNet34 by Empirical Risk Minimization; \textbf{b)} Chen \textit{et al.} \cite{chen2019multi} achieves multi-label classification with GCN for attributes prediction; \textbf{c)} Guided-VAE \cite{ding2020guided} also implements disentangle network but lacks the medical prior knowledge of attributes during learning; \textbf{d)} Li \textit{et al.} \cite{li2019signed} improve performance by generating more benign/malignant images via adversarial training; \textbf{e)} ICADx \cite{kim2018icadx} also proposes the adversarial learning method but additionally introduces shape/margins information for reconstruction; \textbf{f)} DANN~\cite{ganin2016domain} uses adversarial way to ensure a closer match between the source and the target distributions; \textbf{g)} MMD-AAE~\cite{li2018domain} extends adversarial autoencoders by imposing the Maximum Mean Discrepancy (MMD) measure; \textbf{h)} DIVA~\cite{ilse2020diva} proposes a generative model with three independent latent subspaces; \textbf{i)} IRM~\cite{arjovsky2019invariant} designs Invariant Risk Minimization term to learn invariant representation across environments; \textbf{j)} Prithvijit \textit{et al.}\cite{chattopadhyay2020learning} add domain specific masks learning for better domain generalization.


\begin{table*}[!t]
{
\centering
\tiny
\resizebox{\linewidth}{!}{
\begin{tabular}{p{2.4cm}<{\centering}|p{1.2cm}<{\centering}|p{1.2cm}<{\centering}|p{1.2cm}<{\centering}|p{1.2cm}<{\centering}}
\hline
\multirow{2}{*}{Methodology} & \tabincell{c}{train on \\ InH1} & \tabincell{c}{train on \\ InH2} & \tabincell{c}{train on \\ InH3} & \tabincell{c}{train on \\ DDSM~\cite{bowyer1996digital}}  \\
\cline{2-5}
 & \tabincell{c}{test on \\ InH1} & \tabincell{c}{test on \\ InH2} & \tabincell{c}{test on \\ InH3} & \tabincell{c}{test on \\ DDSM~\cite{bowyer1996digital}}  \\
\hline
ERM~\cite{he2016deep} & 0.888 & 0.847 & 0.776 & 0.847 \\
Chen \textit{et al.} \cite{chen2019multi} & 0.924 & 0.878 & 0.827 & 0.871 \\
Guided-VAE \cite{ding2020guided} & 0.921 & 0.867 & 0.809 & 0.869 \\
ICADx \cite{kim2018icadx} & 0.911 & 0.871 & 0.816 & 0.879 \\
Li \textit{et al.} \cite{li2019signed} & 0.908 & 0.859 & 0.828 & 0.875 \\ \cline{1-5}
\textbf{Ours-single} & \textbf{0.952} &  \textbf{0.898} & \textbf{0.864} & \textbf{0.919}\\
\hline
\end{tabular}
}
}
\caption{In-distribution AUC results on public DDSM~\cite{bowyer1996digital} and three in-house datasets (training and testing on the same single dataset.)}
\label{tab:comparison_all_single}
\end{table*}

\begin{table*}
{
\centering
\
\resizebox{\linewidth}{!}{
\begin{tabular}{c|c|c|c|c|c|c|c|c}
\hline
Irrelevant Encoder & Attribute Learning & $s$ & $\mathcal{L}_{rec}$ & Var & \textbf{AUC(InH1)} & \textbf{AUC(InH2)} & \textbf{AUC(InH3)} & \textbf{AUC(DDSM)}\\
\hline
$\times$ & $\times$ & $\times$ & $\times$ & $\times$ & 0.822 & 0.758 & 0.735 & 0.779\\
$\times$ & $\times$ & $\times$ & $\times$ & $\checkmark$ & 0.829 & 0.768 & 0.746 & 0.788\\
$\times$ & multi-task & $\times$ & $\times$ & $\times$ & 0.851 & 0.793 & 0.775 & 0.801\\
$\times$ & $\mathcal{L}_{gcn}$ & $\times$ & $\times$ & $\times$ & 0.877 & 0.827 & 0.804 & 0.830\\
$\times$ & $\mathcal{L}_{gcn}$ & $\times$ & $\times$ & $\checkmark$ & 0.884 & 0.835 & 0.808 & 0.835\\
$\times$ & $\mathcal{L}_{gcn}$ & $\times$ & $\checkmark$ & $\times$ & 0.911 & 0.846 & 0.816 & 0.844\\
DAL & $\mathcal{L}_{gcn}$ & $\times$ & $\checkmark$ & $\times$ & 0.931 & 0.862 & 0.841 & 0.878\\
DAL & $\times$ & $\times$ & $\checkmark$ & $\times$ & 0.913 & 0.840 & 0.823 & 0.852\\
$\times$ & $\mathcal{L}_{gcn}$ & $\checkmark$ & $\checkmark$ & $\times$ & 0.916 & 0.851 & 0.821 & 0.859\\
one branch & $\mathcal{L}_{gcn}$ & $\checkmark$ & $\checkmark$ & $\times$ & 0.926 & 0.856 & 0.830 & 0.863\\
DAL & $\mathcal{L}_{gcn}$ & $\checkmark$ & $\checkmark$ & $\times$ & 0.941 & 0.867 & 0.849 & 0.887\\
DAL & $\mathcal{L}_{gcn}$ & $\checkmark$ & $\checkmark$ & $\checkmark$ & \textbf{0.948} &  \textbf{0.874} & \textbf{0.858} & \textbf{0.892}\\
\hline
\end{tabular}
}
}
\caption{Ablation Studies Testing on InH1/InH2/InH3/DDSM~\cite{bowyer1996digital} while Training on the Other Three Datasets. (OOD settings)}
\label{ablation_all}
\end{table*}

\begin{table*}[t]
{
\centering
\tiny
\resizebox{\linewidth}{!}{
\begin{tabular}{p{1.8cm}<{\centering}|p{1.8cm}<{\centering}|p{1.8cm}<{\centering}|p{1.8cm}<{\centering}}
\hline
\tabincell{c}{train on \\ InH(1,2)} & \tabincell{c}{train on \\ InH(1,3)} & \tabincell{c}{train on \\ InH(3,2)} & \tabincell{c}{train on \\ InH(1,2,3)} \\
\hline
0.885 & 0.881 & 0.887 & 0.892 \\
\hline
\end{tabular}
}
}
\caption{Ablation study on the combination of training data sets. Take testing on pubilic dataset DDSM~\cite{bowyer1996digital} as an example. (OOD settings)}
\label{ablation_numdata}
\end{table*}

\begin{table*}[t]
{
\tiny
\centering
\resizebox{\linewidth}{!}{
\begin{tabular}{p{1.8cm}<{\centering}|p{1.8cm}<{\centering}|p{1.8cm}<{\centering}|p{1.8cm}<{\centering}}
\hline
\tabincell{c}{test on \\ InH1} & \tabincell{c}{test on \\ InH2} & \tabincell{c}{test on \\ InH3} & \tabincell{c}{test on \\ DDSM~\cite{bowyer1996digital}} \\
\hline
0.939 & 0.874 & 0.852 & 0.892 \\
\hline
\end{tabular}
}
}
\caption{AUC of testing on data set InH1/InH2/InH3/DDSM~\cite{bowyer1996digital} while training on InH1+InH2+InH3.}
\label{results_eachdata}
\end{table*}

\noindent\textbf{Results \& Analysis on Domain Generalization.} 
To verify the effectiveness of our learning method on out-of-distribution (OOD) samples, we train our model on the combination of three datasets from three different hospitals and test on the other unseen dataset from the other hospital which is the different domain from all training sets. 

As shown in Table~\ref{tab:comparison_all}, our methods can achieve state-of-the-art results in all settings. Specifically, the first five lines are the methods based on different representation learning and we extend them to our domain generalization task. The next five lines are the methods aiming at domain generalization. Li \textit{et al.} \cite{li2019signed} generate more data under the current domain, the larger number of data improves the performance compared with ERM~\cite{he2016deep} but the augmentation for the current domain greatly limits its ability of domain generalization. Prithvijit \textit{et al.}\cite{chattopadhyay2020learning} learn domain-specific mask (Clipart, Sketch, Painting), however, the gap exists in medical images can not balance through mask learning. DANN~\cite{ganin2016domain} and MMD-AAE~\cite{li2018domain} design distance constraints between the source and the target distributions. However, the key to achieving great classification performance in medical diagnosis is to explore the disease-related features which are invariant in multiple domains. Simply distance-constrain is not robust enough and limits the performance. The advantage of Guided-VAE~\cite{ding2020guided} and DIVA~\cite{ilse2020diva} over mentioned methods above may be due to the disentanglement learning in the former methods. IRM~\cite{arjovsky2019invariant} learns invariant representation across environments by Invariant Risk Minimization term which improves their results to a certain extent. However, lacking the guidance of attribute and disentanglement learning limits their performance. Guided-VAE~\cite{ding2020guided} introduces the attribute prediction which improves their performance than DIVA~\cite{ilse2020diva}. The improvements in ICADx~\cite{kim2018icadx}, Guided-VAE~\cite{ding2020guided} prove the importance of the guidance of attribute learning. Although ICADx~\cite{kim2018icadx} uses the attributes during learning, it fails to model correlations between attributes and benign/malignant diagnosis, which limits their performance. 
With further exploration of attributes via GCN, our method can outperform ICADx~\cite{kim2018icadx}, Guided-VAE~\cite{ding2020guided}. Compared to Chen \textit{et al.} \cite{chen2019multi} that also implement GCN to learn attributes, we additionally employ disentanglement learning with variance regularizer which can help to identify invariant disease-related features during prediction.

\noindent \textbf{Comparison with In-distribution results.} In addition, to further validate our effectiveness, we compute the in-distribution AUC performance of every single dataset. We implement the methods which aim at representation learning on each single dataset, \emph{i.e.}, training and testing on the data from the same hospital(domain). Such in-distribution results can serve as the upper bounds of our generalization method since their training and testing data come from the same domain distribution. To adapt our proposed mechanism to the in-distribution situation, we
change our network with two branches to only one branch accordingly for extracting features into $a,s,z$ since training data is only from one hospital(\textbf{Ours-single}), \emph{i.e.,} one domain without domain influence.
As shown in Table~\ref{tab:comparison_all_single}, based on disentanglement mechanism and the guidance of attribute learning, \textbf{Ours-single} still get the state-of-art performance. We argue that the disentangling mechanism with the guidance of attributes helps effective learning of disease-related features under a single domain. Results in Table~\ref{tab:comparison_all_single} can be seen as the upper bound results of each setting in Table~\ref{tab:comparison_all}. Our results in Table~\ref{tab:comparison_all} are slightly lower than results in Table~\ref{tab:comparison_all_single} by 0.4\% to 2.7\%. We argue that based on our mechanism for domain generalization, our method training under OOD can get the evenly matched effect with the training mode of the in-distribution that training and testing on the same domain. For example, as shown when testing on DDSM~\cite{bowyer1996digital}, performances of our model training on InH1+InH2+InH3 and training on DDSM itself are comparable.

\subsection{Ablation Study}

\noindent \textbf{Ablation study on each components.} To verify the effectiveness of each component in our model, we evaluate some variant models. Table~\ref{ablation_all} shows the ablation study results (under OOD settings: testing on InH1/InH2/InH3/DDSM respectively, training on the other three datasets).


Here are some interpretations for the variants:
\begin{enumerate}
\item \emph{Irrelevant Encoder} denotes whether using \emph{irrelevant encoder} during the reconstructing phase, with \emph{One branch} denotes only using one branch for the \emph{irrelevant encoder} without distinguishing multiple domains and \emph{DAL} denotes using domain adaptive layer for distinguishing multiple domains in \emph{irrelevant encoder};
\item \emph{Attribute Learning} denotes the way to use attributes: $\times$ means not using any attributes for learning, \emph{multi-task} means using a fully connected layer to predict the multiple attributes, and $\mathcal{L}_{gcn}$ means using our GCN network for learning attributes;
\item $s$ denotes whether split the latent factor $s$ out for disentanglement in training;
\item $\mathcal{L}_{rec}$ denotes whether use the reconstruction loss in training;
\item $Var$ denotes whether use the \emph{Variance Regularizer} in training.
\end{enumerate}

\begin{table*}[!t]
{
\centering
\tiny
\resizebox{\linewidth}{!}{
\begin{tabular}{p{2.5cm}<{\centering}|p{1.2cm}<{\centering}|p{1.2cm}<{\centering}|p{1.2cm}<{\centering}|p{1.2cm}<{\centering}}
\hline
\multirow{2}{*}{ratio of ADL} & \tabincell{c}{train on \\ InH2 \\ +InH3 \\ +DDSM} & \tabincell{c}{train on \\ InH1 \\ +InH3 \\ +DDSM} & \tabincell{c}{train on \\ InH1 \\ +InH2 \\ +DDSM} & \tabincell{c}{train on \\ InH1 \\ +InH2 \\ +InH3}  \\
\cline{2-5}
 & \tabincell{c}{test on \\ InH1} & \tabincell{c}{test on \\ InH2} & \tabincell{c}{test on \\ InH3} & \tabincell{c}{test on \\ DDSM}  \\
\hline
one layer & 0.926 & 0.857 & 0.835 & 0.864 \\
1/3 & 0.930 & 0.863 & 0.842 & 0.871 \\
1/2 & 0.942 & 0.871 & 0.847 & 0.883 \\
2/3 & 0.946 & 0.874 & 0.853 & 0.889 \\
\cline{1-5}
\textbf{all} & \textbf{0.948} &  \textbf{0.874} & \textbf{0.858} & \textbf{0.892}\\
\hline
\end{tabular}
}
}
\caption{Effectiveness of the ratio of domain adaptive layers on public DDSM~\cite{bowyer1996digital} and three in-house datasets.}
\label{tab:effectiveness_BN_number}
\end{table*}


\begin{table*}[!t]
{
\centering
\tiny
\resizebox{\linewidth}{!}{
\begin{tabular}{p{2.5cm}<{\centering}|p{1.2cm}<{\centering}|p{1.2cm}<{\centering}|p{1.2cm}<{\centering}|p{1.2cm}<{\centering}}
\hline
\multirow{2}{*}{Methodology} & \tabincell{c}{train on \\ InH2 \\ +InH3 \\ +DDSM} & \tabincell{c}{train on \\ InH1 \\ +InH3 \\ +DDSM} & \tabincell{c}{train on \\ InH1 \\ +InH2 \\ +DDSM} & \tabincell{c}{train on \\ InH1 \\ +InH2 \\ +InH3}  \\
\cline{2-5}
 & \tabincell{c}{test on \\ InH1} & \tabincell{c}{test on \\ InH2} & \tabincell{c}{test on \\ InH3} & \tabincell{c}{test on \\ DDSM}  \\
\hline
ME & 0.946 & 0.873 & 0.855 & 0.891 \\
GL & 0.947 & 0.872 & 0.854 & 0.887 \\
\cline{1-5}
\textbf{DAL} & \textbf{0.948} &  \textbf{0.874} & \textbf{0.858} & \textbf{0.892}\\
\hline
\end{tabular}
}
}
\caption{AUC evaluation of Domain Generalization on public DDSM~\cite{bowyer1996digital} and three in-house datasets under different domain adaptive mechanisms.}
\label{tab:comparison_adaptive_domain_layer}
\end{table*}

As shown, every component is effective for classiﬁcation performance. It is worth noting that using naive GCN also leads to a boosting of around 6\% in average. Such a result can demonstrate that the attributes can guide the learning of disease-related features. Meanwhile, disentanglement learning also causes a noticeable promotion, which may be due to that the disease-related features can be easier identified through disentanglement learning without mixing information with others. Moreover, Line7-8 in Table~\ref{ablation_all} validate disease-related features can be disentangled better with the guidance of exploring attributes. Line 2-3 from the bottom in Table~\ref{ablation_all} validates that distinguishing multiple domains improves the generalization performance. Comparing the last two lines, the regularizer we used is also helpful to disentangle invariant disease-related features. Besides Line2, 5 and 12 of Table~\ref{ablation_all} show that GCN and other components in our model are still well effective under variance constraints. 

To abate the impact of the combination of training domains, we train our model under different training combinations. Take testing on DDSM~\cite{bowyer1996digital} as an example. As shown in Table~\ref{ablation_numdata}, the more types of domains the better effect of our model. Due to the different correlations between different domains, the effect will be different under different combinations. But based on the inter mechanism of our model, influences between different domains are not obvious and three domains are sufﬁcient to achieve comparable results.

Under the setting: testing on DDSM~\cite{bowyer1996digital} (OOD) while training on InH1+InH2+InH3, we also list the results of our invariant model DIM-GCN (OOD model) under testing on the testing set of InH1/InH2/InH3 (in-distribution) as shown in Tab.~\ref{results_eachdata}. In addition, under the same setting, we also test our variant model \textbf{Ours-single} (in-distribution model). The result testing on unseen DDSM~\cite{bowyer1996digital} (OOD) is 0.861, testing on InH1/InH2/InH3 (in-distribution) which are from the same training sets (InH1+InH2+InH3) are 0.944, 0.880, and 0.853 respectively. The variant model testing on InH1/InH2/InH3 (the same domain as the training set) behaves comparably with ours in Tab.~\ref{results_eachdata} and is slightly better since our DIM-GCN split some inter-domain correlation which can decent performance under domain generalization. Thus, the variant model faces a larger drop over our invariant model DIM-GCN when generalizing to the unseen DDSM dataset (0.892 vs 0.861).

\begin{table*}[t]
{
\tiny
\centering
\resizebox{\linewidth}{!}{
\begin{tabular}{p{2.5cm}<{\centering}|p{1.2cm}<{\centering}|p{1.2cm}<{\centering}|p{1.2cm}<{\centering}|p{1.2cm}<{\centering}}
\hline
Methodology & \tabincell{c}{ACC \\-InH1} & \tabincell{c}{ACC \\-InH2} & \tabincell{c}{ACC \\-InH3} & \tabincell{c}{ACC \\ -DDSM} \\
\hline
ERM-multitask & 0.618 & 0.560 & 0.596 & 0.662\\
Chen \textit{et al.} \cite{chen2019multi} & 0.827 & 0.795 & 0.748 & 0.842\\
ICADX \cite{kim2018icadx} & 0.743 & 0.647 & 0.612 & 0.739 \\
Proposed Method & \textbf{0.914} & \textbf{0.877} & \textbf{0.858} & \textbf{0.934}\\
\hline
\end{tabular}
}
}
\caption{Overall Prediction Accuracy of Multi Attributes (Mass shapes, Mass margins). Testing on InH1/InH2/InH3/DDSM~\cite{bowyer1996digital} while Training on the Other Three Datasets. (OOD settings)}
\label{interpretable}
\end{table*}

\noindent \textbf{Ablation study on the ratio of using adaptive domain layers.} To verify the effectiveness of the ratio of using adaptive domain layers, we replaced the original BN layer with DAL in different ratios in the \emph{Irrelevant Encoder}. 
The results are shown in Tab.~\ref{tab:effectiveness_BN_number}, specifically, $1/3$ means only $1/3$ BN layers in the network are replaced, others and so forth. As we can see, under the lower ratio, the performances are close to \emph{One branch} in Tab.~\ref{ablation_all} for poorer domain-interpretability. The higher ratio can get better performance with more robust domain-interpretability.

\begin{figure*}[!t]
\begin{center}
    \includegraphics[height=0.61\linewidth]{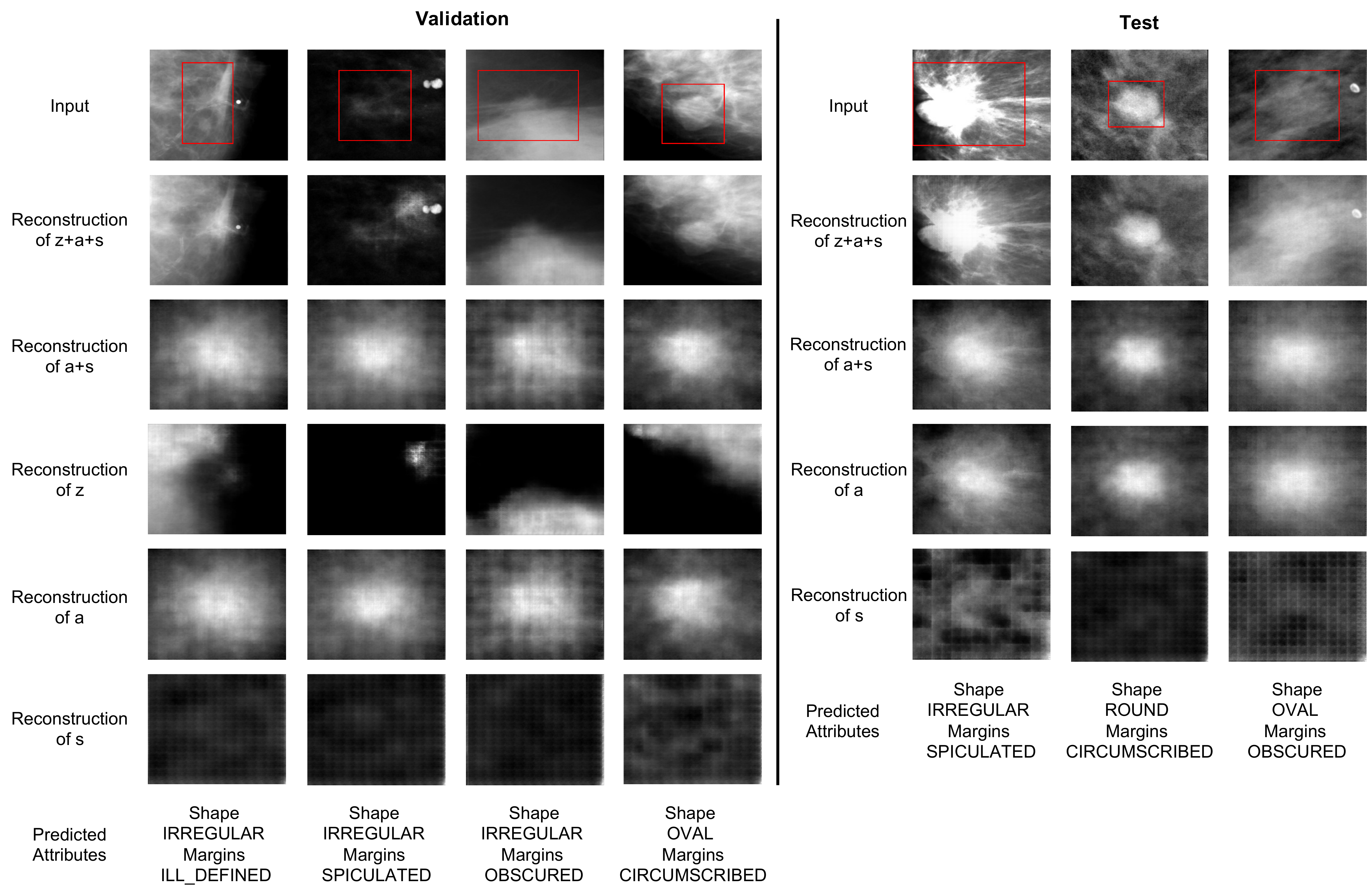}
\end{center}
    \centering
    \caption{Visualization on validation and testing dataset under OOD settings. Lesion regions are marked by red rectangles. Each row represents the reconstruction of different latent variables. Each column represents different cases. Note that there is no reconstruction result of $z$ at the test stage because the test domain has no corresponding irrelevant encoder.
}
    \label{fig:visual}
\end{figure*}

\noindent \textbf{Ablation study on Domain Adaptive Mechanism}
To investigate the proposed adaptive domain layer in the \emph{Irrelevant Encoder} sophisticatedly, we analyze various implementation forms of multiple domains as follows:

\begin{enumerate}
\item \textbf{Multiple Encoders(ME)}. Since the irrelevant encoder contains the information of domain environments, an intuitive idea is using multiple irrelevant encoders so as to each domain has one irrelevant encoder directly.
\item \textbf{Grouped Layer(GL)}. To reduce the parameter quantity of ME, we consider several groups of blocks with each group containing two blocks in the same structures. Each group only responds to one block each time, and different domains are different block combinations. The number of groups is set to $n$ that satisfies $2^n=m$ ($m$ denotes the number of domains, if $m$ is not the exponential power of 2, find $\hat{m}$ that is larger than $m$ and is the least number that satisfies $2^n=\hat{m}$). Thus each domain is a permutation combination based on each group choosing one block.

\item \textbf{Domain Adaptive Layer(DAL)}. To further reduce the parameter quantity and achieve domain generalization, we propose the domain adaptive layer. For the batch normalization layers in the irrelevant encoder, each domain contains one specific batch normalization layer. The scaling and shifting parameters in each layer are learned adaptively.
\end{enumerate}

We conduct experiments on the different implementation methods for modeling multiple domains. Results are shown in Table.~\ref{tab:comparison_adaptive_domain_layer}. Three different kinds of mechanisms have comparable performance. Since BN can usually be used as an effective measure for domain adaptation~\cite{ioffe2015batch}, \textbf{DAL} can be slightly better than the others with lighter computation, especially compared to \textbf{ME}.


\subsection{Prediction Accuracy of Attributes}
We argue that attributes can be the guidance of benign/malignant classification. In the current domain generalization task, we also calculate the OOD prediction accuracy of attributes in ours and other attribute-based representative methods in Table~\ref{interpretable}. The settings are: \textbf{ACC-InH1}: testing on InH1 while training on InH2+InH3+DDSM; \textbf{ACC-InH2}: testing on InH2 while training on InH1+InH3+DDSM; \textbf{ACC-InH3}: testing on InH3 while training on InH2+InH1+DDSM; \textbf{ACC-DDSM~\cite{bowyer1996digital}}: testing on DDSM while training on InH2+InH3+InH1. Our method gets the best prediction accuracy on the attributes over other methods under OOD.

\subsection{Visualization}

We investigate the three latent factors $a$, $s$, and $z$ to see if our model is able to successfully disentangle them. We visualize different parts in Fig.~\ref{fig:visual} via the reconstruction effect and the predicted attributes of the current image. Since the \emph{Irrelevant Enc} is partially domain-dependent, validating set is from the same domain as the training set but the testing set is from a different domain. We show reconstructions of all latent factors in validation (Left in Fig.~\ref{fig:visual}) and reconstructions without $z$ in test (Right in Fig.~\ref{fig:visual}). As we can see, the disease-related features $s+a$ mainly reflect the disease-related information since they mainly reconstruct the lesion regions without mixing others. The disease-irrelevant $z$ features mainly learn features such as the contour of the breasts, pectoralis, and other irrelevant glands without lesion information. It is worth noting that the white dots on the image which are caused by machine shooting are learned by $z$ as visualization. This means that through the ability of domain generalization our method can disentangle the irrelevant part successfully and prevent it from predicting the disease. Moreover, the macroscopic features $a$ capture the macroscopic attributes of the lesions, \emph{e.g.,} shape and density; while the microscopic features $s$ learn properties like global context, texture, or other invisible features but related to disease classification. These results further indicate the effectiveness and interpretability of our DIM-GCN.

\section{Conclusion}

We propose a novel Domain Invariant Model with Graph Convolutional Network (DIM-GCN) on domain generalization for medical diagnosis, which is a multi-domain variational auto-encoder with the disentanglement mechanism equipped with a graph convolutional network. We evaluate our method on both public and in-house datasets for patch-level mammogram mass benign/malignant classification. Potential results demonstrate the effectiveness of our DIM-GCN, we will try to generalize this method to other medical imaging problems such as lung cancer, liver cancer, \emph{etc}.


%

\newpage

\appendices

\section{Formal Description of Theorem 3.1}

We assume that the $s,a|y$ and $z|d$ belong to the following exponential families:
\begin{align}
    p(z|d) & := p_{\mathbf{T}^z,\bm{\Gamma}^z_{d}}(z|d), \nonumber \\
p(s,a|y) & := p_{\mathbf{T}^s, \bm{\Gamma}^s_{y}}(s|y)p_{\mathbf{T}^a,\bm{\Gamma}^a_y}(a|y), \nonumber
\end{align}
where 
\begin{equation*}
    p_{\mathbf{T}^{u}, \bm{\Gamma}^u_{o}}(u) = {\small\prod_{i=1}^{q_u}} \exp\Big( {\small \sum_{j=1}^{k_u}} T^u_{i,j}(t_i) \Gamma^t_{o,i,j} + B_i(u_i) -  C^u_{o,i} \Big),
\end{equation*}
for any $u \in \{s,a\}$ with $o=y$ and $u = z$ with $o=d$. The $\{T^u_{i,j}(u_i)\}$, $\{\Gamma^u_{o,i,j}\}$ denote the sufficient statistics and natural parameters, $\{B_i\}$ and $\{C_{o,i}^u\}$ denote the base measures and normalizing constants to ensure the integral of distribution equals to 1. Let {\small $\mathbf{T}^u(u)\!:=\![\mathbf{T}^{u}_{1}(u_{1}),...,\mathbf{T}^{u}_{q_u}(u_{q_{u}}) ]$ $\!\in\! \mathbb{R}^{k_u \times q_u}$ $\big(\mathbf{T}^u_{i}(u_{i}) \!:=\! [T^u_{i,1}(u_i),...,T^u_{i,k_u}(u_i)], \forall i \in [q_u]\big)$}, {\small$\bm{\Gamma}^u_{o} \!:=\! \left[\bm{\Gamma}^{u}_{o,1},...,\bm{\Gamma}^{u}_{o,q_u} \right]$ $\!\in\! \mathbb{R}^{k_u \times q_u}$ $\big(\bm{\Gamma}^u_{o,i} \!:=\! [\Gamma^t_{o,i,1},...,\Gamma^t_{o,i,k_u}], \forall i \in [q_u]\big)$}. Further, we assume that the $s,a,z \to x$ and $a \to A$ follow the \emph{additive noise model} (ANM): 
\begin{align}
    x = f_x(s,a,z) + \epsilon_x, A = f_A(a) + \epsilon_A. \nonumber 
\end{align}
Denote $\theta:=\{\mathbf{T}^s,\mathbf{T}^z,\mathbf{T}^a,\bm{\Gamma}^s_{y},\bm{\Gamma}^a_{y},\bm{\Gamma}^z_{d},f_x,f_A\}$, we define the \emph{disentanglement} as follows: 
\begin{definition}
\label{def:dis}
We call the $s,a,z$ as disentangled under $\theta$, if for any $\tilde{\theta}:=\{\tilde{\mathbf{T}}^s,\tilde{\mathbf{T}}^z,\tilde{\mathbf{T}}^a,\tilde{\bm{\Gamma}}^s_{y},\tilde{\bm{\Gamma}}^a_{y},\tilde{\bm{\Gamma}}^z_{d},\tilde{f}_x,\tilde{f}_A \}$ that giving rise to the same observational distributions: $p_\theta(x,A,y|d) = p_{\tilde{\theta}}(x,A,y|d)$ for any $x,y,A$ and $d$, there exists invertible matrices $M_s,M_z,M_a$ and vectors $b_z,b_s,b_a$ such that:
\begin{align}
    \tilde{\mathbf{T}}([\tilde{f}_x^{-1}]_{\mathcal{S}}(x)) & = M_s\mathbf{T}([f_x^{-1}]_{\mathcal{S}}(x)) + b_s, \nonumber \\
    \tilde{\mathbf{T}}([\tilde{f}_x^{-1}]_{\mathcal{Z}}(x)) & = M_z\mathbf{T}([f_x^{-1}]_{\mathcal{Z}}(x)) + b_z, \nonumber \\
    \tilde{\mathbf{T}}([\tilde{f}_x^{-1}]_{\mathcal{A}}(x)) & = M_a\mathbf{T}([f_x^{-1}]_{\mathcal{A}}(x)) + b_a, \nonumber
\end{align}
where the $\mathcal{S},\mathcal{Z},\mathcal{A}$ denote the space of the latent variables $s,z,a$. 
\end{definition}
The theorem 3.1 is then mathematically formulated as:
\begin{theorem}
\label{thm:disentangle}
For any $\theta$, under following assumptions:
\begin{enumerate}[nosep]
    \item The characteristic functions of $\epsilon_x,\epsilon_A$ are almost everywhere nonzero.
    \item $f_x, f_A$ are bijective;
    \item The $\{T^u_{i,j}\}_{1\leq j \leq k_u}$ are linearly independent in $\mathcal{S}$, $\mathcal{Z}$ or $\mathcal{A}$ for each $i \in [q_u]$ for any $u=s,a,z$.
    \item There exists $d_1,...,d_m$ and $y_1,...,y_K$ such that $\big[[\bm{\Gamma}^{z}_{d_2} - \bm{\Gamma}^{z}_{d_1}]^\mathsf{T},...,[\bm{\Gamma}^{z}_{d_m} - \bm{\Gamma}^{z}_{d_1}]^\mathsf{T}\big]^\mathsf{T} \in \mathbb{R}^{m \times (q_z \times k_z)}$ and $\big[[\bm{\Gamma}^{u=s,a}_{y_2} - \bm{\Gamma}^{u=s,a}_{y_1}]^\mathsf{T},...,[\bm{\Gamma}^{u=s,a}_{y_K} - \bm{\Gamma}^{u=s,a}_{y_1}]^\mathsf{T}\big]^\mathsf{T} \in \mathbb{R}^{m \times (q_u \times k_u)}$ have full column rank.
\end{enumerate}
we have that $s,z,a$ are disentangled under $\theta$.
\end{theorem}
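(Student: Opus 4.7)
The plan is to follow the standard identifiability-of-nonlinear-ICA template, adapted to a Bayesian network with two observation channels $(x,A)$ and two conditioning variables $(y,d)$. First I would equate the joint conditional distributions $p_\theta(x,A,y\mid d)=p_{\tilde\theta}(x,A,y\mid d)$, factor out the shared prior $p(y)$ (which is not part of $\theta$), and split the equality into its two marginals $p_\theta(A\mid y)=p_{\tilde\theta}(A\mid y)$ and $p_\theta(x\mid y,d)=p_{\tilde\theta}(x\mid y,d)$. Because both $x$ and $A$ come from additive-noise models, Assumption 1 (nonvanishing characteristic functions of $\epsilon_x,\epsilon_A$) lets me pass to Fourier space and divide out the noise, yielding noiseless equalities of the pushforward distributions of $a$ through $f_A$ and of $(s,a,z)$ through $f_x$. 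Assumption 2 then allows me to compose with $\tilde f_A^{-1}\circ f_A$ and $\tilde f_x^{-1}\circ f_x$ to obtain density equalities directly on the latent spaces $\mathcal{A}$ and $\mathcal{S}\times\mathcal{A}\times\mathcal{Z}$.

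Next I would identify $a$ from the $A$-channel alone. Writing both sides in the exponential-family form and subtracting the log-density at a reference label $y_1$ from those at $K$ other labels $y_2,\dots,y_K$ eliminates the base measures $B_i$ and the normalizing constants, reducing the problem to a linear system whose coefficient matrix is precisely the full-column-rank matrix $\big[(\bm\Gamma^{a}_{y_j}-\bm\Gamma^{a}_{y_1})^\mathsf{T}\big]_{j=2}^{K}$ from Assumption 4. Solving this system yields $\tilde{\mathbf T}^a(\tilde a)=M_a\mathbf T^a(a)+b_a$ for an invertible $M_a$, and the linear-independence clause in Assumption 3 upgrades this into the component-wise conclusion $\tilde a=h_a(a^\star)$ claimed in Theorem~\ref{thm:dis}.

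With $a$ pinned down, I would then attack the pair $(s,z)$ through the $x$-channel. After peeling off $\epsilon_x$, the equality becomes a statement about the pushforward of $p(s\mid y)p(a\mid y)p(z\mid d)$ through the bijection $\tilde f_x^{-1}\circ f_x$. I would take log-differences in two \emph{separate} directions: across $y$ at fixed $d$ to isolate variations in $\bm\Gamma^{s}_y$, and across $d$ at fixed $y$ to isolate variations in $\bm\Gamma^{z}_d$. Each direction gives a linear system whose invertibility is guaranteed by the corresponding full-column-rank hypothesis in Assumption 4, producing affine relations $\tilde{\mathbf T}^s(\tilde s)=M_s\mathbf T^s(s)+b_s$ and $\tilde{\mathbf T}^z(\tilde z)=M_z\mathbf T^z(z)+b_z$, from which Assumption 3 again extracts the stated component-wise equivalences $\tilde s=h_s(s^\star)$ and $\tilde z=h_z(z^\star)$.

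The main obstacle I anticipate is the \emph{block-diagonality} step hidden in the last paragraph: the map $\tilde f_x^{-1}\circ f_x$ a priori mixes all three latent blocks, and I need to show that its Jacobian respects the $(s,a,z)$-partition so that the $M_s,M_z,M_a$ act only within their own blocks. My plan is to differentiate the log-density equality with respect to $y$ and $d$ and examine the mixed partials: on the ground-truth side these cross-partials vanish because $s\perp z\mid(y,d)$ and because $d$ affects only $z$ while $y$ affects only $(s,a)$, so the corresponding cross-partials on the $\tilde\theta$ side must also vanish. Combined with linear independence of the sufficient statistics, this forces the Jacobian to be block diagonal and closes the identification argument.
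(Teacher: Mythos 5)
Your overall template matches the paper's: cancel the additive noise via characteristic functions (Assumption 1), push the density equality onto the latent space via the bijections (Assumption 2), take log-differences of the exponential-family densities across $y$ and across $d$, and invert the resulting linear systems using the full-column-rank condition (Assumption 4) and the linear independence of the sufficient statistics (Assumption 3). However, there is a genuine gap at the one step that is specific to this theorem: separating the $s$-block from the $a$-block. Taking log-differences across $y$ on the $x$-channel does not ``isolate variations in $\bm{\Gamma}^s_y$'' as you claim --- it produces the \emph{combined} relation
\begin{align*}
\bm{\overline{\Gamma}}^{s,\top}\mathbf{T}^s([f_x^{-1}]_{\mathcal{S}}(x))+\bm{\overline{\Gamma}}^{a,\top}\mathbf{T}^a([f_x^{-1}]_{\mathcal{A}}(x))
=\bm{\overline{\tilde{\Gamma}}}^{s,\top}\tilde{\mathbf{T}}^s([\tilde{f}_x^{-1}]_{\mathcal{S}}(x))+\bm{\overline{\tilde{\Gamma}}}^{a,\top}\tilde{\mathbf{T}}^a([\tilde{f}_x^{-1}]_{\mathcal{A}}(x))+\tilde{b}_a+\tilde{b}_s,
\end{align*}
because both $s\mid y$ and $a\mid y$ respond to the same conditioning variable $y$. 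Your two proposed mechanisms for resolving this both fail for the same reason. First, identifying $a$ ``from the $A$-channel alone'' yields an affine relation between $\tilde{\mathbf{T}}^a(\tilde{f}_A^{-1}(A))$ and $\mathbf{T}^a(f_A^{-1}(A))$, but since you have split the joint into its two marginals, nothing connects this to the $a$-\emph{component of} $\tilde{f}_x^{-1}\circ f_x$, which is what appears in the combined relation above. Second, your mixed-partials argument in $(y,d)$ can only separate the $(s,a)$-block from the $z$-block, because $d$ never touches $s$ or $a$; no cross-derivative in $y$ versus $d$ can distinguish $s$ from $a$, as they depend on exactly the same conditioning variable.

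The paper closes exactly this gap by \emph{not} discarding the joint: it forms the stacked observation $v:=[x^{\top},A^{\top}]^{\top}$ with noise $\epsilon:=[\epsilon_x^{\top},\epsilon_A^{\top}]^{\top}$ and generating map $(s,a,z)\mapsto(f_x(s,a,z),f_A(a))$, and reruns the same Fourier/log-difference argument on $v$. Because $A$ depends only on $a$, this yields the affine relation for $\bm{\overline{\Gamma}}^{a,\top}\mathbf{T}^a([f_x]_{\mathcal{A}}^{-1}(x))$ --- i.e., for the $a$-coordinate \emph{as read through $f_x^{-1}$} --- which can then be subtracted from the combined $(s,a)$ relation to isolate $\mathbf{T}^s([f_x]_{\mathcal{S}}^{-1}(x))$. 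With the three block-wise affine relations in hand, the paper concludes by invoking the invertibility argument of \cite[Theorem 7.9]{sun2020latent} rather than a Jacobian block-diagonality computation. Your proof would go through if you replace the marginal-splitting step with this joint-channel argument.
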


\begin{proof}
For simplicity, we denote $\!\tilde{p}(u|o):= p_{\tilde{\mathbf{T}}^u,\tilde{\bm{\Gamma}}^u_o}(u|o)$. Since $p_\theta(x|d,y)=p_{\tilde{\theta}}(x|d,y)$, then we have $\int p_{f_x}(x|s,a,z)p(s,a|y)p(z|d)dsdadz \!=\! \int p_{\tilde{f}_x}(x|s,a,z) \! \tilde{p}(s,a|y)\tilde{p}(z|d)dsdadz$. According to the chain rule of changing from $s,a,z$ to $\bar{x} \!:=\! f_x(s,a,z)$, we have that $\int p_{\varepsilon_x}(x-\bar{x})p(f_x^{-1}(\bar{x})|d,y)J_{f^{-1}}(\bar{x})d\bar{x} \!=\! \int p_{\varepsilon_x}(x-\bar{x})p(\tilde{f}_x^{-1}(\bar{x})|d,y)J_{\tilde{f}^{-1}}(\bar{x})d\bar{x}$, where $J_f(x)$ denotes the Jacobian matrix of $f$ on $x$. Denote $p'(\bar{x}|d,y):=p(f_x^{-1}(\bar{x})|d,y)J_{f^{-1}}(\bar{x})$. Applying Fourier transformation to both sides, we have $F[p'](\omega)\varphi_{\varepsilon_x}(\omega)=F[\tilde{p}'](\omega)\varphi_{\varepsilon_x}(\omega)$, where $\varphi_{\varepsilon_x}$ denotes the characteristic function of $\varepsilon_x$. Since they are almost everywhere nonzero, we have that $F[p'](\omega)=F[\tilde{p}']$, which means that $p'(\bar{x}|d,y)=\tilde{p}'(\bar{x}|d,y)$. This is equivalent to the following:
\begingroup
\allowdisplaybreaks
\begin{align}
\label{eq:log-x}
& \log{\mathrm{vol}J_{f_x}(x)} + \sum_{u=s,a}\sum_{i=1}^{q_u}( \log{B_i([f_{x,i}^{-1}]_{\mathcal{U}}(x)}) - \log{C_i(y)} \nonumber \\
& + \sum_{j=1}^{k_u} T^u_{i,j}(f_{x,i}^{-1}(x))\Gamma^u_{i,j}(x)) + \sum_{i=1}^{q_z}( \log{B_i([f_{x,i}^{-1}]_{\mathcal{Z}}(x)}) \nonumber \\
& - \log{C_i(d)} + \sum_{j=1}^{k_z} T^z_{i,j}(f_{x,i}^{-1}(x))\Gamma^z_{i,j}(x)) \nonumber \\
& = \log{\mathrm{vol}J_{\tilde{f}_x}(x)} + \sum_{u=s,a}\sum_{i=1}^{q_u}( \log{\tilde{B}_i([\tilde{f}_{x,i}^{-1}]_{\mathcal{U}}(x)}) - \log{\tilde{C}_i(y)} \nonumber \\
& + \sum_{j=1}^{k_u} \tilde{T}^u_{i,j}(\tilde{f}_{x,i}^{-1}(x))\tilde{\Gamma}^u_{i,j}(x)) + \sum_{i=1}^{q_z}( \log{\tilde{B}_i([\tilde{f}_{x,i}^{-1}]_{\mathcal{Z}}(x)}) \nonumber \\
& - \log{\tilde{C}_i(d)} + \sum_{j=1}^{k_z} \tilde{T}^z_{i,j}(\tilde{f}_{x,i}^{-1}(x))\tilde{\Gamma}^z_{i,j}(x)). 
\end{align}
\endgroup
Subtract the Eq.~\eqref{eq:log-x} from the one with $y_k$, we have that 
\begingroup
\allowdisplaybreaks
\begin{align}
\label{eq:sa-y}
	& \sum_{u=s,a} \left( \langle \mathbf{T}^u([f_x^{-1}]_{\mathcal{U}}(x)),\bm{\overline{\Gamma}}^u(y_k)\rangle + \sum_i \log{ \frac{C_i(y_1)}{C_i(y_k)} } \right) \nonumber \\
	& = \sum_{u=s,a} \left( \langle \tilde{\mathbf{T}}^u([\tilde{f}_x^{-1}]_{\mathcal{U}}(x)),\bm{\overline{\tilde{\Gamma}}}^u(y_k)\rangle + \sum_i \log{\frac{\tilde{C}_i(y_1)}{\tilde{A}_i(y_k)} } \right),
\end{align}
\endgroup
for all $k \in [m]$, where $\bm{\bar{\Gamma}}(y) = \bm{\Gamma}(y) - \bm{\Gamma}(y_1)$. Denote $\tilde{b}_u(k) = \sum_{u=s,a}\sum_{i}^{q_u} \frac{\tilde{A}^u_i(d_k)A^u_i(d_k)}{\tilde{C}^u_i(d_k)A^u_i(d_1)}$ for $k \in [K]$. Besides, by subtracting the Eq.~\eqref{eq:log-x} from the one with $d_l$, we have 
\begingroup
\allowdisplaybreaks
\begin{align}
\label{eq:z-d}
	&  \langle \mathbf{T}^z([f_x^{-1}]_{\mathcal{Z}}(x)),\bm{\overline{\Gamma}}^z(d_l)\rangle + \sum_i \log{ \frac{C_i(d_1)}{C_i(d_l)} } \nonumber \\
	& = \langle \tilde{\mathbf{T}}^z([\tilde{f}_x^{-1}]_{\mathcal{Z}}(x)),\bm{\overline{\tilde{\Gamma}}}^z(d_l)\rangle + \sum_i \log{\frac{\tilde{C}_i(d_1)}{\tilde{A}_i(d_l)} },
\end{align}
\endgroup
for all $k \in [m]$, where $\bm{\bar{\Gamma}}(d) = \bm{\Gamma}(d) - \bm{\Gamma}(d_1)$. Denote $\tilde{b}_z(l) = \sum_i \frac{\tilde{A}^z_i(d_l)A^s_i(d_l)}{\tilde{A}^z_i(d_l)A^z_i(d_1)}$ for $l \in [m]$. According to assumption (4), we have that:
\begingroup
\allowdisplaybreaks
\begin{align}
& \bm{\overline{\Gamma}}^{z,\top} \mathbf{T}^z([f_x^{-1}]_{\mathcal{Z}}(x)) = \bm{\overline{\tilde{\Gamma}}}^{z,\top} \tilde{\mathbf{T}}^z([\tilde{f}_x^{-1}]_{\mathcal{Z}}(x)) + \tilde{b}_z, \label{eq:z-inv}\\ 
& \bm{\overline{\Gamma}}^{s,\top} \mathbf{T}^s([f_x]_{\mathcal{S}}^{-1}(x)) + \bm{\overline{\Gamma}}^{a,\top} \mathbf{T}^a([f_x]_{\mathcal{A}}^{-1}(x)) \nonumber \\
& = \bm{\overline{\tilde{\Gamma}}}^{s,\top} \tilde{\mathbf{T}}^s([\tilde{f}_x]_{\mathcal{S}}^{-1}(x)) + \bm{\overline{\tilde{\Gamma}}}^{a,\top} \tilde{\mathbf{T}}^a([\tilde{f}_x]_{\mathcal{A}}^{-1}(x)) + \tilde{b}_a + \tilde{b}_s.\label{eq:sa-inv}
\end{align}
\endgroup
Similarly, we also have $p'(\bar{A}|y)=\tilde{p}'(\bar{A}|y)$, which means that
\begingroup
\allowdisplaybreaks
\begin{align}
\label{eq:log-A}
& \log{\mathrm{vol}J_{f_A}(A)} + \sum_{i=1}^{q_a}( \log{B_i([f_{A,i}^{-1}]_{\mathcal{A}}(A)}) - \log{C_i(d)} \nonumber \\
& + \sum_{j=1}^{k_a} T^a_{i,j}(f_{A,i}^{-1}(A))\Gamma^a_{i,j}(x)) \nonumber \\
& = \log{\mathrm{vol}J_{\tilde{f}_A}(A)} + \sum_{i=1}^{q_a}( \log{B_i([\tilde{f}_{A,i}^{-1}]_{\mathcal{A}}(A)}) - \log{C_i(d)} \nonumber \\
& + \sum_{j=1}^{k_a} \tilde{T}^a_{i,j}(\tilde{f}_{A,i}^{-1}(A))\tilde{\Gamma}^a_{i,j}(A)),
\end{align}
\endgroup
which implies that 
\begingroup
\allowdisplaybreaks
\begin{align}
\bm{\overline{\Gamma}}^{a,\top} \mathbf{T}^a([f_A^{-1}]_{\mathcal{A}}(A)) = \bm{\overline{\tilde{\Gamma}}}^{a,\top} \tilde{\mathbf{T}}^a([\tilde{f}_A^{-1}]_{\mathcal{A}}(A)) + \tilde{b}_a.
\end{align}
\endgroup
Denote $v := [x^{\top},A^{\top}]^{\top}$, $\epsilon := [\epsilon_x^{\top},\epsilon_A^{\top}]^{\top}$, $h(v) = [[f_x]_{\mathcal{S}}^{-1}(x)^{\top},f_A^{-1}(A)^{\top}]^{\top}$. Applying the same trick above, we have that 
\begingroup
\allowdisplaybreaks
\begin{align}
    \bm{\overline{\Gamma}}^{a,\top} \mathbf{T}^a([f_x]_{\mathcal{A}}^{-1}(x)) = \bm{\overline{\tilde{\Gamma}}}^{a,\top} \tilde{\mathbf{T}}^a([\tilde{f}_x]_{\mathcal{A}}^{-1}(x))  + \tilde{b}_a. \label{eq:a-inv}
\end{align}
\endgroup
Combining Eq.~\eqref{eq:z-inv},~\eqref{eq:sa-inv},~\eqref{eq:a-inv}, we have that 
\begingroup
\allowdisplaybreaks
\begin{align}
  \bm{\overline{\Gamma}}^{z,\top} \mathbf{T}^z([f_x^{-1}]_{\mathcal{Z}}(x)) = \bm{\overline{\tilde{\Gamma}}}^{z,\top} \tilde{\mathbf{T}}^z([\tilde{f}_x^{-1}]_{\mathcal{Z}}(x)) + \tilde{b}_z,   \\
  \bm{\overline{\Gamma}}^{a,\top} \mathbf{T}^a([f_x]_{\mathcal{A}}^{-1}(x)) = \bm{\overline{\tilde{\Gamma}}}^{a,\top} \tilde{\mathbf{T}}^a([\tilde{f}_x]_{\mathcal{A}}^{-1}(x))  + \tilde{b}_a. \\
  \bm{\overline{\Gamma}}^{s,\top} \mathbf{T}^s([f_x]_{\mathcal{S}}^{-1}(x)) = \bm{\overline{\tilde{\Gamma}}}^{s,\top} \tilde{\mathbf{T}}^s([\tilde{f}_x]_{\mathcal{S}}^{-1}(x))  + \tilde{b}_s.
\end{align}
\endgroup
Applying the same trick in \cite[Theorem 7.9]{sun2020latent}, we have that $(\bm{\overline{\Gamma}}^{u,\top})^{-1}\bm{\overline{\tilde{\Gamma}}}^{u,\top}$ are invertible for $u=s,a,z$. 
\end{proof}

\section{Test Set of DDSM}
To provide convenience for latter works, we publish the list of our test division on the public dataset DDSM~\cite{bowyer1996digital}.

\begin{lstlisting}
benign_12_case1889 benign_04_case3357 
cancer_01_case3084 benign_04_case0304 
benign_09_case4060 benign_05_case1491
cancer_08_case1464 cancer_09_case0049 
cancer_11_case1678 cancer_04_case1090 
cancer_05_case0157 benign_06_case0366
benign_04_case0270 benign_02_case1321 
cancer_05_case0142 cancer_05_case0127 
benign_04_case3103 cancer_07_case1143 
cancer_08_case1128 benign_11_case1792 
benign_06_case0396 cancer_15_case3371 
benign_07_case1686 benign_13_case0485
benign_09_case4085 cancer_02_case0112 
cancer_15_case3398 benign_03_case1435 
cancer_01_case3027 cancer_07_case1114
cancer_03_case1070 benign_03_case1432 
cancer_06_case1182 cancer_05_case0140 
benign_12_case1947 benign_12_case1922
cancer_05_case0210 cancer_08_case1403 
cancer_05_case0173 benign_01_case0235 
benign_02_case1317 benign_11_case1836
cancer_05_case0222 cancer_08_case1532 
benign_06_case0372 cancer_02_case0077 
benign_11_case1855 cancer_05_case0139
benign_08_case1786 cancer_07_case1159 
cancer_10_case1573 cancer_05_case0181 
benign_09_case4038 cancer_05_case0192
benign_06_case0363 cancer_06_case1122 
benign_01_case3113 benign_09_case4003 
benign_06_case0367 cancer_12_case4139
cancer_14_case1985 cancer_05_case0183 
cancer_10_case1642 cancer_05_case0206 
cancer_03_case1007 cancer_12_case4108
cancer_09_case0340 benign_07_case1412 
cancer_05_case0085 benign_09_case4065 
benign_03_case1363 benign_09_case4027
benign_10_case4016 benign_13_case3433 
benign_09_case4090
\end{lstlisting}


\ifCLASSOPTIONcaptionsoff
  \newpage
\fi



%
{\small
\bibliographystyle{plain}
\bibliography{egbib}
}

%




\end{document}